\theoremstyle{plain}
\newtheorem{theorem}{Theorem}[section]     
\newtheorem{lemma}[theorem]{Lemma}    
\newtheorem{corollary}{Corollary}[section]
\theoremstyle{definition}
\newtheorem{definition}{Definition}[section]
\newenvironment{prevproof}[2]{\noindent {\em {Proof of {#1}~\ref{#2}:}}}{$\hfill\qed$\vskip \belowdisplayskip}
\date{}
\begin{document}

\title{Square Hellinger Subadditivity for Bayesian Networks\\ and its Applications to Identity Testing}

\author {
Constantinos Daskalakis\thanks{Supported by a Microsoft Research Faculty Fellowship, and NSF Award CCF-1551875, CCF-1617730 and CCF-1650733.}\\
EECS, MIT \\
\tt{costis@mit.edu}
\and
Qinxuan Pan\thanks{Supported by NSF Award CCF-1551875, CCF-1617730 and CCF-1650733.} \\
EECS, MIT\\
\tt{qinxuan@mit.edu}
}

\maketitle

\begin{abstract}
We show that the square Hellinger distance between two Bayesian networks on the same directed graph, $G$, is subadditive with respect to the neighborhoods of $G$. Namely, if $P$ and $Q$ are the probability distributions defined by two Bayesian networks on the same DAG, our inequality states that the square Hellinger distance, $H^2(P,Q)$, between $P$ and $Q$ is upper bounded by the sum, $\sum_v H^2(P_{\{v\} \cup \Pi_v}, Q_{\{v\} \cup \Pi_v})$, of the square Hellinger distances between the marginals of $P$ and $Q$ on every node $v$ and its parents $\Pi_v$ in the DAG. Importantly, our bound does not involve the conditionals but the marginals of $P$ and $Q$. We derive a similar inequality for more general Markov Random Fields.

As an application of our inequality, we show that distinguishing whether two Bayesian networks $P$ and $Q$ on the same (but potentially unknown) DAG satisfy $P=Q$ vs $d_{\rm TV}(P,Q)>\epsilon$ can be performed from $\tilde{O}(|\Sigma|^{3/4(d+1)} \cdot n/\epsilon^2)$ samples, where $d$ is the maximum in-degree of the DAG and $\Sigma$ the domain of each variable of the Bayesian networks. If $P$ and $Q$ are defined on potentially different and potentially unknown trees, the sample complexity becomes $\tilde{O}(|\Sigma|^{4.5} n/\epsilon^2)$, whose dependence on $n, \epsilon$ is optimal up to logarithmic factors. Lastly, if $P$ and $Q$ are product distributions over $\{0,1\}^n$ and $Q$ is known, the sample complexity becomes $O(\sqrt{n}/\epsilon^2)$, which is optimal up to constant factors.
%
\end{abstract}

\section{Introduction} \label{sec:intro}

At the heart of scientific activity lies the practice of formulating models about observed phenomena, and developing tools to test the validity of these models. Oftentimes, the models are probabilistic; for example, one may model the effectiveness of a drug in a population as a truncated Normal, or the waiting times in a queuing system as exponential random variables. When a model is probabilistic, testing its validity becomes a distribution testing problem.  In our drug example, one would like to measure the effectiveness of the drug in a sample of the population, and somehow determine whether these samples are ``consistent'' with a truncated Normal distribution. As humans delve into the study of more and more complex phenomena, they quickly face high-dimensional distributions. The goal of this paper is to advance our understanding of high-dimensional hypothesis testing.

Consider the task of testing whether a high-dimensional distribution $P$, to which we have sample access, is identical to some model distribution $Q \in \Delta(\Sigma^n)$, where $\Sigma$ is some alphabet and $n$ is the dimension. A natural goal, which we will call {\em goodness-of-fit testing} in the tradition of Statistics, is to distinguish 
$$P=Q~~~~\text{ from }~~~~d(P,Q)>\epsilon,$$
where $d(\cdot,\cdot)$ is some distance between distributions and $\epsilon >0$ some accuracy parameter. In this paper, we will take $d(\cdot,\cdot)$ to be the total variation distance, although all our results hold if one considers Hellinger distance instead.

Sometimes we do not have a model distribution $Q$, but sample access to two distributions $P, Q \in \Delta(\Sigma^n)$, and we want to determine if they are equal. Again, a natural goal is to distinguish 
$$P=Q~~~~\text{ from }~~~~d(P,Q)>\epsilon.$$
We will call this latter problem, where both distributions are unknown, {\em identity testing}.

As our access to $P$ or to both $P$ and $Q$ in the above problems is via samples, we cannot hope to always solve them correctly. So our goal is actually probabilistic. We want to be correct with probability at least $1-\delta$, for some parameter $\delta$. For ease of presentation, let us take $\delta=1/3$ for the remainder of this paper. Clearly, this probability can be then boosted to arbitrary $\delta$'s at a cost of a factor of $O(\log 1/\delta)$ in the sample complexity.

Both goodness-of-fit and identity testing have received tremendous attention in Statistics. In the above formulation of these problems, they have received a fair amount of attention over the last decade in both Theoretical Computer Science and Information Theory; see e.g. \cite{BatuFFKRW01,BatuKR04,Paninski08,ValiantV14,AcharyaDK15,CDGR15} and their references. Despite intense research, the high-dimensional (large $n$) version of the problems has received much smaller attention~\cite{BatuFFKRW01,Alon2007testing,RubinfeldX10,BhattacharyyaFRV11,AcharyaDK15}, despite its importance for applications. In part, this is due to the fact that the problem, as stated above, is hopeless for large $n$. For example, if $Q$ is the uniform distribution over $\{0,1\}^n$, it is known that $\Theta(2^{n/2}/\epsilon^2)$ samples are necessary (and sufficient) for goodness-of-fit testing~\cite{BatuFFKRW01,Paninski08,ValiantV14}. 

Our goal in this paper is to leverage combinatorial structure in the specification of $P$ and $Q$ to get around these exponential lower bounds. We are motivated by prior work of Daskalakis, Dikkala and Kamath~\cite{DaskalakisDK16}, which initiated the study of testing problems for structured distributions. They considered testing problems for Ising models, showing that goodness-of-fit and independence testing (testing if an Ising model is a product measure over $\{0,1\}^n$) can be solved efficiently from ${\rm poly}(n/\epsilon)$ samples. Their bounds hold for Ising models defined on arbitrary graphs, and for the stronger notion of symmetric Kullback-Leibler divergence (which upper bounds (the square of) total variation distance). In particular, their results are able to side-step the afore-described exponential lower bounds for a broad and important class of probability distributions.

Motivated by this recent work on the Ising model, in this paper we study testing problems on {\em Bayesian networks}, which is a versatile and widely used  probabilistic framework  for modeling high-dimensional distributions with structure. A Bayesian network specifies a probability distribution in terms of a DAG $G$ whose nodes $V$ are random variables taking values in some alphabet $\Sigma$. To describe the probability distribution, one specifies conditional probabilities $P_{X_v|X_{\Pi_v}}(x_v|x_{\Pi_v})$, for all vertices $v$ in $G$, and configurations $x_v\in \Sigma$ and $x_{\Pi_v} \in \Sigma^{\Pi_v}$, where $\Pi_v$ represents the set of parents of $v$ in $G$, taken to be $\emptyset$ if $v$ has no parents. In terms of these conditional probabilities, a probability distribution over $\Sigma^V$ is defined as follows:
$$P(x)  = \prod_{v} P_{X_v | X_{\Pi_v}} (x_v | x_{\Pi_v}), \text{for all }x \in \Sigma^V.$$
 
A special case of a Bayesian network is, of course, a Markov chain, where the graph $G$ is a directed line graph. But Bayesian networks are much more versatile and are in fact universal. They can interpolate between product measures and arbitrary distributions over $\Sigma^V$ as the DAG becomes denser and denser. Because of their versatility they have found myriad applications in diverse fields of application and study, ranging from probability theory to engineering, computational biology, and law. Our goal is to determine whether the basic tasks of goodness-of-fit and identity testing for these fundamental distributions are actually testable. To achieve this, we develop a deeper understanding into the statistical distance between Bayesian networks.

\paragraph{Results and Techniques.} Given sample access to two Bayes nets $P$ and $Q$ on $n$ variables taking values in some set $\Sigma$, we would like to decide whether $P=Q$ vs $\delta(P,Q) \ge \epsilon$, where $\delta(P,Q)$ denotes the total variation distance between $P$ and $Q$. To build our intuition, suppose that $P$ and $Q$ are defined on the same DAG, and $Q$ is given. Our goal is to test the equality of $P$ and $Q$, with fewer than $O(|\Sigma|^{n/2}/\epsilon^2)$ samples required by standard methods, by exploiting the structure of the DAG.

A natural way to exploit the structure of the DAG is to try to ``localize the distance'' between $P$ and $Q$. It is not hard to prove that the total variation distance between $P$ and $Q$ can be upper bounded as follows:
$$\delta(P,Q) \le \sum_v \delta(P_{\{v\}\cup \Pi_v},Q_{\{v\}\cup \Pi_v})+\sum_v \delta(P_{\Pi_v},Q_{\Pi_v}),$$
where, as above, $\Pi_v$ denotes the parents of $v$ in the DAG, if any. The sub-additivity of total variation distance with respect to the neighborhoods of the DAG allows us to argue the following:
\begin{itemize}
\item If $P=Q$, then $P_{\{v\}\cup \Pi_v} = Q_{\{v\}\cup \Pi_v}$, for all $v$.
\item If $\delta(P,Q)\ge \epsilon$, then there exists some $v$ such that $\delta(P_{\{v\}\cup \Pi_v},Q_{\{v\}\cup \Pi_v}) \ge \epsilon/2n$.
\end{itemize}
In particular, we can distinguish between $P=Q$ and $\delta(P,Q) \ge \epsilon$ by running $n$ tests, distinguishing $P_{\{v\}\cup \Pi_v} = Q_{\{v\}\cup \Pi_v}$ vs $\delta(P_{\{v\}\cup \Pi_v},Q_{\{v\}\cup \Pi_v}) \ge \epsilon/2n$, for all $v$. We output ``$P=Q$'' if and only if all these tests output equality. Importantly the distributions $P_{\{v\}\cup \Pi_v}$ and $Q_{\{v\}\cup \Pi_v}$ are supported on $|{\{v\}\cup \Pi_v}|$ variables. Hence if our DAG has maximum in-degree $d$, each of these tests requires $O({|\Sigma|^{(d+1)/2} n^2/\epsilon^2})$ samples. An extra $O(\log n)$ factor in the sample complexity can guarantee that each test succeeds with probability at least $1-1/3n$, hence all tests succeed simultaneously with probability at least $2/3$. Unfortunately the quadratic dependence of the sample complexity on $n$ is sub-optimal.

A natural approach to improve the sample complexity is to consider instead the {\em Kullback-Leibler divergence} between $P$ and $Q$. Pinsker's inequality gives us that ${\rm KL}(P||Q) \ge 2 \delta^2(P,Q)$. Hence, ${\rm KL}(P||Q) =0$, if $P=Q$, while ${\rm KL}(P||Q) \ge 2 \epsilon^2$, if $\delta(P,Q) \ge \epsilon$. Moreover, we can exploit the chain rule of the Kullback-Leibler divergence to  argue the following:
\begin{itemize}
\item If $P=Q$, then $P_{\{v\}\cup \Pi_v} = Q_{\{v\}\cup \Pi_v}$, for all $v$.
\item If $\delta(P,Q) \ge \epsilon$, then there exists some $v$ such that ${\rm KL}(P_{\{v\}\cup \Pi_v} || Q_{\{v\}\cup \Pi_v}) \ge 2\epsilon^2/n$.
\end{itemize}
Hence, to distinguish $P=Q$ vs $\delta(P,Q) \ge \epsilon$ it suffices to run $n$ tests, distinguishing $P_{\{v\}\cup \Pi_v} = Q_{\{v\}\cup \Pi_v}$ vs ${\rm KL}(P_{\{v\}\cup \Pi_v} || Q_{\{v\}\cup \Pi_v}) \ge 2\epsilon^2/n$, for all $v$. We output ``$P=Q$'' if and only if all these tests output equality. Unfortunately, goodness-of-fit with respect to the Kullback-Leibler divergence requires infinitely many samples. On the other hand, if every element in the support of $Q_{\{v\}\cup \Pi_v}$ has probability $\Omega\left({\epsilon^2/n \over |\Sigma|^{d+1}}\right)$, it follows from the $\chi^2$-test of~\cite{AcharyaDK15} that  $P_{\{v\}\cup \Pi_v} = Q_{\{v\}\cup \Pi_v}$ vs ${\rm KL}(P_{\{v\}\cup \Pi_v} || Q_{\{v\}\cup \Pi_v}) \ge 2\epsilon^2/n$ can be distinguished from $O(|\Sigma|^{{d+1 \over 2}}n/\epsilon^2)$ samples. An extra $O(\log n)$ factor in the sample complexity can guarantee that each test succeeds with probability at least $1-1/3n$, hence all tests succeed simultaneously with probability at least $2/3$. So we managed to improve the sample complexity by a factor of $n$. This requires, however, preprocessing the Bayes-nets so that there are no low-probability elements in the supports of the marginals. We do not know how to do this pre-processing unfortunately.

So, to summarize, total variation distance is subadditive in the neighborhoods of the DAG, resulting in $O(n^2/\epsilon^2)$ sample complexity. Kullback-Leibler  is also subadditive and importantly bounds the  square of total variation distance. This is a key to a $O(n/\epsilon^2)$ sample complexity, but it requires no low probability elements in the support of the marginals, which we do not know how to enforce. Looking for a middle ground to address these issues, we study {\em Hellinger distance}, which relates to total variation distance and Kullback-Leibler as follows:
$$\delta(P,Q) \le \sqrt{2} \cdot H(P,Q) \le \sqrt{{\rm KL}(P||Q)}.$$
One of our main technical contributions is to show that the square Hellinger distance between two Bayesian networks on the same DAG is subadditive on the neighborhoods, namely:
\begin{align}
H^2(P,Q) \le \sum_v H^2(P_{\{v\}\cup \Pi_v},Q_{\{v\}\cup \Pi_v}). \label{eq:subadditivity of Hell}
\end{align}
The above bound, given as Corollary~\ref{bayesnetlocalization}, follows from a slightly more general statement given in Section~\ref{hellinger} as Theorem~\ref{subadditivity}. Given the sub-additivity of the Hellinger distance and its relation to total variation, we can follow the same rationale as above to argue the following:
\begin{itemize}
\item If $P=Q$, then $P_{\{v\}\cup \Pi_v} = Q_{\{v\}\cup \Pi_v}$, for all $v$.
\item If $\delta(P,Q) \ge \epsilon$, then there exists some $v$ such that $H^2(P_{\{v\}\cup \Pi_v}, Q_{\{v\}\cup \Pi_v}) \ge \epsilon^2/2n$.
\end{itemize}
Hence, to distinguish $P=Q$ vs $\delta(P,Q) \ge \epsilon$ it suffices to run $n$ tests, distinguishing $P_{\{v\}\cup \Pi_v} = Q_{\{v\}\cup \Pi_v}$ vs $H^2(P_{\{v\}\cup \Pi_v} || Q_{\{v\}\cup \Pi_v}) \ge \epsilon^2/2n$, for all $v$. Importantly goodness-of-fit testing with respect to the square Hellinger distance can be performed from $O(n/\epsilon^2)$ samples. This is the key to our testing results. 

While we presented our intuition for goodness-of-fit testing and when the structure of the Bayes-nets is known, we actually do not need to know the structure and can handle sample access to both distributions. Our results are summarized below. All results below hold if we replace total variation with Hellinger distance.
\begin{itemize}
\item Given sample access to two Bayes-nets $P, Q$ on the same but unknown structure of maximum in-degree $d$, ${\tilde O}(|\Sigma|^{3/4(d+1)} \cdot {n  \over \epsilon^2})$ samples suffice to test $P=Q$ vs $\delta(P,Q) \ge \epsilon$. See Theorem~\ref{thm:identity Bayesnets}. The running time is quasi-linear in the sample size times $O(n^{d+1})$. If the DAG is known, the running time is quasi-linear in the sample size times $O(n)$.

\item Given sample access to two Bayes-nets $P, Q$ on possibly different and unknown trees, $\tilde{O}(|\Sigma|^{4.5} \cdot {n \over \epsilon^2})$ samples suffice to test $P=Q$ vs $\delta(P,Q) \ge \epsilon$. See Theorem~\ref{thm:identity testing trees}. The running time is quasi-linear in the sample size times $O(n^{6})$. The dependence of our sample complexity on $n$ and $\epsilon$ is optimal up to logarithmic factors, as shown by~\cite{DaskalakisDK16}, even when one of the two distributions is given explicitly.

Proving this result presents the additional analytical difficulty that two Bayes-nets on different trees have different factorization, hence it is unclear if their square Hellinger distance can be localized to subsets of nodes involving a small number of variables. In Section~\ref{combinatorics}, we prove that given any pair of tree-structured Bayes-nets $P$ and $Q$, there exists a common factorization of $P$ and $Q$ so that every factor involves up to $6$ variables. This implies an useful subadditivity bound for square Hellinger distance into $n$ subsets of $6$ nodes. See Theorem~\ref{twotreeslocalization}, and the underlying combinatorial lemma, Lemma~\ref{ordering}.

\item Finally, our results above were ultimately based on localizing the distance between two Bayes-nets on neighborhoods of small size, as dictated by the  Bayes-net structure. As we have already mentioned, even if the Bayes-nets are known to be trees, and one of the Bayesnets is given explicitly, $O(n/\epsilon^2)$ samples are necessary. Pushing the simplicitly of the problem to the extreme, we consider the case where both $P$ and $Q$ are Bayes-nets on the empty graph,  $Q$ is given, and $\Sigma=\{0,1\}$. Using a non-localizing test, we show that the identity of $P$ and $Q$ can be tested from $O(\sqrt{n}/\epsilon^2)$ samples, which is optimal up to constant factors, as shown by~\cite{DaskalakisDK16}. See Theorem~\ref{thm:goodness of fit testing product measures}. 

The proof of this theorem also exploits the subadditivity of the square Hellinger distance. Suppose $p_1,\ldots,p_n$ and $q_1,\ldots,q_n$ are the expectations of the marginals of $P$ and $Q$ on the different coordinates, and without loss of generality suppose that $q_i \le {1 \over 2}$, for all $i$. We use the subadditivity of square Hellinger to show that, if $\delta(P,Q) \ge \epsilon$, then $\sum_i {(p_i-q_i)^2 \over q_i} \ge \epsilon^2/2$. Noticing that $\sum_i {(p_i-q_i)^2 \over q_i}$ is an identical expression to the $\chi^2$ divergence applied to vectors $(p_1,\ldots,p_n)$ and $(q_1,\ldots,q_n)$, we reduce the problem to a $\chi^2$-test, mimicking the approach of~\cite{AcharyaDK15}. We only need to be careful that $\sum_i p_i$ and $\sum_i q_i$ do not necessarily equal $1$, but this does not create any issues.
\end{itemize}

\paragraph{Learning vs Testing.} A natural approach to testing the equality between two Bayes-nets $P$ and $Q$ is to first use samples from $P$ and $Q$ to learn Bayes nets $\hat{P}$ and $\hat{Q}$ that are respectively close to $P$ and $Q$, then compare $\hat{P}$ and $\hat{Q}$ offline, i.e. without drawing further samples from $P$ and $Q$. While this approach has been used successfully for single-dimensional hypothesis testing, see e.g.~\cite{AcharyaDK15}, it presents analytical and computational difficulties in the high-dimensional regime. While learning of Bayes nets has been a topic of intense research, including the celebrated Chow-Liu algorithm for tree-structured Bayes-nets~\cite{ChowL68}, we are aware of no computationally efficient algorithms that operate with $\tilde{O}(n/\epsilon^2)$ samples without assumptions. In particular, using net-based techniques~\cite{DevroyeL01,DaskalakisK14,AcharyaJOS14b}, standard calculations show that any Bayes-net on $n$ variables and maximum in-degree $d$ can be learned from $\tilde{O}({n \cdot |\Sigma|^d \over \epsilon^2})$ samples, but this algorithm is highly-inefficient computationally (exponential in $n$). Our algorithms are both efficient, and beat the sample complexity of this inefficient algorithm. On the efficient algorithms front, we are only aware of efficient algorithms that provide guarantees when the number of samples is $>>{n \cdot |\Sigma|^d \over \epsilon^2}$ or that place assumptions on the parameters or the structure of the Bayes-net to be able to learn it (see e.g.~\cite{anandkumar2012high,Bresler15} and their references), even when the structure is a tree~\cite{ChowL68}. Our algorithms do not need any assumptions on the parameters or the structure of the Bayes-net.

\paragraph{Roadmap.} In Section~\ref{hellinger} we present our proof of the square Hellinger subadditivity for a general Markov structures, as Theorem~\ref{subadditivity}. We give corollaries of this theorem to product measures, Markov Chains, tree structured Bayes-nets, and general Bayes-nets. Section~\ref{combinatorics} presents our combinatorial result that two tree-structured Bayes-nets on different graphs always have a common factorization, whose factors only involve up to $6$ variables. Lastly, Section~\ref{testing} presents all our testing results.

\section{Localization Using Hellinger}\label{hellinger}

We first define the Hellinger distance and its square.

\begin{definition}[\textbf{Hellinger}]
For two discrete distributions $p = (p_1, \ldots, p_K)$ and $q = (q_1, \ldots, q_K)$ over a domain of size $K$, their \textit{Hellinger distance} is defined as
\[ H(p,q) = \frac{1}{\sqrt{2}} \sqrt{\sum_{k=1}^K (\sqrt{p_k} - \sqrt{q_k})^2}. \]
The squared Hellinger distance is therefore
\[ H^2(p,q) = \frac{1}{2} \sum_{k=1}^K (\sqrt{p_k} - \sqrt{q_k})^2 = 1 - \sum_{k=1}^K \sqrt{p_k q_k}. \]
\end{definition}

The Hellinger distance always takes value in $[ 0,1 ]$. Compared with the total variation distance $\delta(p,q) = \frac{1}{2} \sum_{k=1}^K |p_k - q_k|$, Hellinger distance satisfies the following inequalities:
\[ H^2(p,q) \leq \delta(p,q) \leq \sqrt{2} H(p,q). \]

We now introduce our main techinical tool in full generality, showing that the squared Hellinger distance is subadditive across components in the factorization of the distributions into products of conditional probability distributions.

\begin{theorem}[\textbf{Squared Hellinger Subadditivity}]
\label{subadditivity}
Let $X = \{ X_1, \ldots, X_n \}$ be a set of random variables that is partitioned disjointedly into a set of super random variables $X_{S_1}, \ldots, X_{S_L}$, where $X_{S_l} = \{X_i\}_{i \in S_l}$. Suppose that $P$ and $Q$ are joint distributions on the $n$ variables with common factorization structure
\begin{align*}
P(x) & = P_{X_{S_1}} (x_{S_1}) \prod_{l=2}^L P_{X_{S_l} | X_{\Pi_l}} (x_{S_l} | x_{\Pi_l}), \\
Q(x) & = Q_{X_{S_1}} (x_{S_1})\prod_{l=2}^L Q_{X_{S_l} | X_{\Pi_l}} (x_{S_l} | x_{\Pi_l}),
\end{align*}
where $\Pi_l \subset S_1 \cup \cdots \cup S_{l-1}$ corresponds to the set of variables conditioned on which $X_{S_l}$ is independent from everything else in the previous super variables. Then
\begin{multline*}
H^2(P,Q) \leq H^2(P_{X_{S_1}}, Q_{X_{S_1}}) + H^2(P_{X_{S_2}, X_{\Pi_2}}, Q_{X_{S_2}, X_{\Pi_2}}) + \cdots + H^2(P_{X_{S_L}, X_{\Pi_L}}, Q_{X_{S_L}, X_{\Pi_L}}),
\end{multline*}
where we use $P$ and $Q$ with subscripts to represent their marginalizations onto the corresponding set of variables. 
\end{theorem}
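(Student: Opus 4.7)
My plan is induction on the number $L$ of super-variables. The base case $L=1$ is immediate since both sides equal $H^2(P_{X_{S_1}}, Q_{X_{S_1}})$. For the inductive step, I would split off the last block by writing $X = X_{S_1} \cup \cdots \cup X_{S_{L-1}}$ and $Y = X_{S_L}$, with parent set $\Pi := X_{\Pi_L} \subseteq X$. By the common factorization assumption, both $P(x,y) = P_X(x)\, P_{Y|X_\Pi}(y|x_\Pi)$ and similarly for $Q$, since $Y$ is conditionally independent of the rest of $X$ given $X_\Pi$. The inductive step then reduces to proving the two-block inequality
\[
H^2(P,Q) \;\le\; H^2(P_X, Q_X) + H^2(P_{Y,X_\Pi}, Q_{Y,X_\Pi}),
\]
after which applying the inductive hypothesis to $P_X$ and $Q_X$ (which satisfy the same hypothesis over the first $L-1$ super-variables) immediately yields the full bound.

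To prove the two-block inequality, I would work with the Bhattacharyya coefficient $\mathrm{BC}(P,Q) := \sum_x \sqrt{P(x)Q(x)} = 1 - H^2(P,Q)$. Setting $\alpha(x_\Pi) := \sum_y \sqrt{P_{Y|X_\Pi}(y|x_\Pi)\, Q_{Y|X_\Pi}(y|x_\Pi)} \in [0,1]$, the factorization gives
\[
\mathrm{BC}(P,Q) \;=\; \sum_{x_\Pi} \alpha(x_\Pi) \sum_{x_{\bar\Pi}} \sqrt{P_X(x_\Pi, x_{\bar\Pi})\, Q_X(x_\Pi, x_{\bar\Pi})},
\]
where $\bar\Pi$ denotes the complement of $\Pi$ within $X$. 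Subtracting this from $\mathrm{BC}(P_X, Q_X) = \sum_{x_\Pi,x_{\bar\Pi}} \sqrt{P_X Q_X}$ gives
\[
\mathrm{BC}(P_X, Q_X) - \mathrm{BC}(P,Q) \;=\; \sum_{x_\Pi} \bigl(1 - \alpha(x_\Pi)\bigr) \sum_{x_{\bar\Pi}} \sqrt{P_X(x_\Pi, x_{\bar\Pi})\, Q_X(x_\Pi, x_{\bar\Pi})}.
\]

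The crucial step is Cauchy--Schwarz applied to the inner sum: $\sum_{x_{\bar\Pi}} \sqrt{P_X\, Q_X} \le \sqrt{P_{X_\Pi}(x_\Pi)\, Q_{X_\Pi}(x_\Pi)}$. Substituting this bound and recognizing $\sum_{x_\Pi} \sqrt{P_{X_\Pi} Q_{X_\Pi}}\,\alpha(x_\Pi) = \mathrm{BC}(P_{Y,X_\Pi}, Q_{Y,X_\Pi})$, one obtains
\[
\mathrm{BC}(P_X, Q_X) - \mathrm{BC}(P,Q) \;\le\; \mathrm{BC}(P_{X_\Pi}, Q_{X_\Pi}) - \mathrm{BC}(P_{Y,X_\Pi}, Q_{Y,X_\Pi}).
\]
Since $\mathrm{BC}(P_{X_\Pi}, Q_{X_\Pi}) \le 1$, translating back to $H^2 = 1 - \mathrm{BC}$ produces the two-block inequality, completing the induction.

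The main subtlety, rather than obstacle, is the Cauchy--Schwarz step: it is what lets the bound involve only the joint marginal on $S_L \cup \Pi_L$ instead of the full marginal on $X$, which is exactly why the right-hand side of the theorem localizes onto the neighborhoods. Everything else is bookkeeping with Bhattacharyya coefficients and conditional factorizations.
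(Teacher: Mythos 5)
Your proof is correct, and it follows the same overall strategy as the paper: peel off one super-variable at a time, reducing to a two-block inequality of the form $H^2(P,Q)\le H^2(P_X,Q_X)+H^2(P_{Y,X_\Pi},Q_{Y,X_\Pi})$. The difference is entirely in how that key inequality is established. The paper proves it (phrased as a three-node Markov chain) by substituting the algebraic identity $\sqrt{ab}=\tfrac{a+b}{2}-\tfrac12(\sqrt a-\sqrt b)^2$, then bounding the two resulting pieces separately: the error term via $\sum_z\sqrt{P(z\mid y)Q(z\mid y)}\le 1$ (Cauchy--Schwarz for conditionals) and the main term via AM--GM $\sqrt{P(y)Q(y)}\le\tfrac{P(y)+Q(y)}{2}$. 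You instead do pure Bhattacharyya bookkeeping: express $\mathrm{BC}(P_X,Q_X)-\mathrm{BC}(P,Q)$ with nonnegative weights $1-\alpha(x_\Pi)$, and apply a single Cauchy--Schwarz, $\sum_{x_{\bar\Pi}}\sqrt{P_X Q_X}\le\sqrt{P_{X_\Pi}Q_{X_\Pi}}$, to collapse the inner sum to the marginal on $\Pi$. This is leaner, and it also transparently yields the slightly stronger intermediate bound $H^2(P,Q)\le H^2(P_X,Q_X)+H^2(P_{Y,X_\Pi},Q_{Y,X_\Pi})-H^2(P_{X_\Pi},Q_{X_\Pi})$ before you drop the nonnegative subtracted term; the paper's chain lands directly on the weaker two-term bound. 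Both are elementary and correct; yours is arguably the cleaner presentation of the same underlying mechanism.
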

\begin{proof}
We first prove a simple case. Suppose $P$ and $Q$ are joint distributions on $(X,Y,Z)$ with Markov structure $X \to Y \to Z$, so that
\begin{align*}
& P(x,y,z) = P_X(x)P_{Y|X}(y|x)P_{Z|Y}(z|y),\\
& Q(x,y,z) = Q_X(x)Q_{Y|X}(y|x)Q_{Z|Y}(z|y).
\end{align*}
Then we have
\[ H^2(P, Q) \leq H^2(P_{X,Y}, Q_{X,Y}) + H^2(P_{Y,Z}, Q_{Y,Z}). \]

To show this, consider the following chain of (in)equalities (where we suppressed subscripts when it is clear):
\begin{align*}
H^2(P,Q) & = 1 - \sum_{x,y,z} \sqrt{P(x,y,z)Q(x,y,z)} \\
                & = 1 - \sum_{x,y,z} \sqrt{P(x,y)P(z|y)Q(x,y)Q(z|y)} \\
                & = 1 - \sum_{x,y} \sqrt{P(x,y)Q(x,y)} \sum_z \sqrt{P(z|y)Q(z|y)} \\
                & = 1 - \sum_{x,y} \frac{P(x,y)+Q(x,y)}{2} \sum_z \sqrt{P(z|y)Q(z|y)} \\
                & \qquad + \sum_{x,y} \Bigl(\frac{P(x,y)+Q(x,y)}{2} - \sqrt{P(x,y)Q(x,y)} \Bigr) \sum_z \sqrt{P(z|y)Q(z|y)} \\
                & = 1 - \sum_{y} \frac{P(y)+Q(y)}{2} \sum_z \sqrt{P(z|y)Q(z|y)} \\
                & \qquad + \sum_{x,y} \frac{1}{2} \Bigl(\sqrt{P(x,y)} - \sqrt{Q(x,y)} \Bigr)^2 \underbrace{\sum_z \sqrt{P(z|y)Q(z|y)}}_{\leq 1 \text{ by Cauchy Schwarz}} \\
                & \leq 1 - \sum_{y} \sqrt{P(y)Q(y)} \sum_z \sqrt{P(z|y)Q(z|y)} + \sum_{x,y} \frac{1}{2} \Bigl(\sqrt{P(x,y)} - \sqrt{Q(x,y)} \Bigr)^2  \\
                & = 1 - \sum_{y,z} \sqrt{P(y,z)Q(y,z)} + \frac{1}{2} \sum_{x,y} \Bigl(\sqrt{P(x,y)} - \sqrt{Q(x,y)} \Bigr)^2  \\
                & = 1 - (1 - H^2(P_{Y,Z}, Q_{Y,Z})) + H^2(P_{X,Y}, Q_{X,Y}) \\
                & = H^2(P_{X,Y}, Q_{X,Y}) + H^2(P_{Y,Z}, Q_{Y,Z}).
\end{align*}

Proving the theorem for general $P$ and $Q$ entails applying the simple case repeatedly. First consider the three-node Markov chain
\[ \{X_i\}_{i \in (S_1 \cup \cdots \cup S_{L-1}) \setminus \Pi_L} \to X_{\Pi_L} \to X_{S_L}. \]
We apply the simple case to get
\[ H^2(P,Q) \leq H^2( P_{X_{S_1}, \ldots, X_{S_{L-1}}}, Q_{X_{S_1}, \ldots, X_{S_{L-1}}}) + H^2(P_{X_{S_L}, X_{\Pi_L}}, Q_{X_{S_L}, X_{\Pi_L}}). \]
Next consider the three-node Markov chain
\[ \{X_i\}_{i \in (S_1 \cup \cdots \cup S_{L-2}) \setminus \Pi_{L-1}} \to X_{\Pi_{L-1}} \to X_{S_{L-1}}. \]
We can similarly get
\begin{multline*}
H^2( P_{X_{S_1}, \ldots, X_{S_{L-1}}}, Q_{X_{S_1}, \ldots, X_{S_{L-1}}}) \leq H^2( P_{X_{S_1}, \ldots, X_{S_{L-2}}}, Q_{X_{S_1}, \ldots, X_{S_{L-2}}}) \\ + H^2(P_{X_{S_{L-1}}, X_{\Pi_{L-1}}}, Q_{X_{S_{L-1}}, X_{\Pi_{L-1}}}),
\end{multline*}
If we continue this process and assemble everything at the end, we obtain
\begin{multline*}
H^2(P,Q) \leq H^2(P_{X_{S_1}}, Q_{X_{S_1}}) + H^2(P_{X_{S_2}, X_{\Pi_2}}, Q_{X_{S_2}, X_{\Pi_2}}) + \cdots + H^2(P_{X_{S_L}, X_{\Pi_L}}, Q_{X_{S_L}, X_{\Pi_L}}),
\end{multline*}
proving the general case.
\end{proof}

The subadditivity of the squared Hellinger distance across components immediately allows us to localize the discrepancy onto one component, a result that is crucial to our efficient identity tests for structured high-dimensional distributions in Section~\ref{testing}.

\begin{theorem}[\textbf{Localization}]
\label{localization}
Using the same notation as in Theorem~\ref{subadditivity}, if $P$ and $Q$ satisfy $H^2(P,Q) \geq \epsilon$, then there exists some $l$ such that 
\[ H^2(P_{X_{S_l}, X_{\Pi_l}}, Q_{X_{S_l}, X_{\Pi_l}}) \geq \frac{\epsilon}{L}. \]
\end{theorem}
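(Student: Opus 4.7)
The plan is to derive this theorem directly from Theorem~\ref{subadditivity} by a standard pigeonhole/averaging argument; there is essentially no new analytic content beyond the subadditivity bound already established. First, I would apply Theorem~\ref{subadditivity} to $P$ and $Q$ with their shared factorization. Adopting the convention that $\Pi_1 = \emptyset$, so that $P_{X_{S_1}, X_{\Pi_1}}$ and $Q_{X_{S_1}, X_{\Pi_1}}$ just mean $P_{X_{S_1}}$ and $Q_{X_{S_1}}$, the subadditivity inequality can be rewritten with a single uniform index as
\[
H^2(P,Q) \;\leq\; \sum_{l=1}^{L} H^2\!\bigl(P_{X_{S_l}, X_{\Pi_l}},\, Q_{X_{S_l}, X_{\Pi_l}}\bigr).
\]

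Next I would combine this bound with the hypothesis $H^2(P,Q) \geq \epsilon$ to conclude that the right-hand side is at least $\epsilon$. Since it is a sum of $L$ nonnegative terms whose total is $\geq \epsilon$, at least one summand must be $\geq \epsilon/L$. Letting $l^\ast$ be an index achieving this, we obtain $H^2(P_{X_{S_{l^\ast}}, X_{\Pi_{l^\ast}}}, Q_{X_{S_{l^\ast}}, X_{\Pi_{l^\ast}}}) \geq \epsilon/L$, which is exactly the claimed localization.

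The main (and essentially only) point requiring care is bookkeeping: Theorem~\ref{subadditivity} writes the first factor as $H^2(P_{X_{S_1}}, Q_{X_{S_1}})$ with no conditioning, while the remaining factors involve both $X_{S_l}$ and $X_{\Pi_l}$, so one has to adopt the empty-parents convention to state the conclusion uniformly over $l \in \{1,\dots,L\}$. Once this is in place, the result is an immediate corollary of subadditivity combined with averaging.
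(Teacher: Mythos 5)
Your argument is correct and is exactly the averaging/pigeonhole step that the paper leaves implicit when it presents Theorem~\ref{localization} as an immediate consequence of Theorem~\ref{subadditivity}. The bookkeeping convention $\Pi_1 = \emptyset$ you adopt is the natural reading of the notation and matches the paper's intent.
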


In the rest of this section, we state several interesting special cases of our results. We first consider when $P$ and $Q$ are product distributions, recovering the well-known result that squared Hellinger is subadditive across individual variables.

\begin{corollary}[\textbf{Product}] \label{cor:Hellinger subadditivity product measures}
Suppose $P$ and $Q$ are joint distributions on $X = \{ X_1, \ldots, X_n \}$ that factorize completely:
\[ P(x) = \prod_{i=1}^n P_{X_i} (x_i), \qquad Q(x) = \prod_{i=1}^n Q_{X_i} (x_i). \]
Then
\[ H^2(P,Q) \leq H^2(P_{X_1}, Q_{X_1}) + \cdots + H^2(P_{X_n}, Q_{X_n}). \]
In particular, if $H^2(P,Q) \geq \epsilon$, then there exists some $i$ such that
\[ H^2(P_{X_i}, Q_{X_i}) \geq \frac{\epsilon}{n}. \]
\end{corollary}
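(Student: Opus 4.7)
The plan is to derive this corollary as a direct specialization of Theorem~\ref{subadditivity}. Specifically, I would take the partition of the variables $X = \{X_1, \ldots, X_n\}$ to be the singleton partition $S_l = \{l\}$ for $l = 1, \ldots, n$, so that $L = n$ and $X_{S_l} = X_l$. Because $P$ and $Q$ fully factorize over individual coordinates, each $X_l$ is independent of $X_1, \ldots, X_{l-1}$ unconditionally, so the parent set in the factorization can be chosen as $\Pi_l = \emptyset$ for every $l \geq 2$. The common factorization hypothesis of Theorem~\ref{subadditivity} is then exactly the product form $P(x) = \prod_i P_{X_i}(x_i)$ and $Q(x) = \prod_i Q_{X_i}(x_i)$ assumed in the corollary.

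With these choices, the ``joint marginal'' $P_{X_{S_l}, X_{\Pi_l}}$ appearing in the conclusion of Theorem~\ref{subadditivity} simplifies to $P_{X_l}$ (and similarly for $Q$), because $X_{\Pi_l}$ is an empty collection of variables. Plugging this into the theorem's inequality yields
\[
H^2(P,Q) \;\leq\; H^2(P_{X_1}, Q_{X_1}) + H^2(P_{X_2}, Q_{X_2}) + \cdots + H^2(P_{X_n}, Q_{X_n}),
\]
which is the first claim of the corollary.

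For the ``in particular'' clause, I would just apply the pigeonhole principle to the above bound: if the left-hand side is at least $\epsilon$, then the sum of $n$ nonnegative terms on the right-hand side is at least $\epsilon$, so at least one summand must be at least $\epsilon/n$, giving the existence of an index $i$ with $H^2(P_{X_i}, Q_{X_i}) \geq \epsilon/n$. There is no real obstacle here; the only thing worth flagging is that one must verify the partition $\{S_l\}$ and the empty choice of $\Pi_l$ genuinely satisfy the conditional-independence requirement of Theorem~\ref{subadditivity}, but this is immediate from the product structure. Alternatively, one could prove this corollary from scratch via a direct induction on $n$ using the two-variable Cauchy--Schwarz argument embedded in the proof of Theorem~\ref{subadditivity}, but deriving it as a special case is cleaner and is exactly what the preceding theorem was set up to enable.
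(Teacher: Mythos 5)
Your proposal is correct and takes exactly the approach the paper intends: Corollary~\ref{cor:Hellinger subadditivity product measures} is presented in the paper without an explicit proof, as a direct specialization of Theorem~\ref{subadditivity} (taking the singleton partition $S_l=\{l\}$ and $\Pi_l=\emptyset$, which is valid precisely because of the product structure) together with the pigeonhole argument of Theorem~\ref{localization}. Nothing is missing, and the observation that one should sanity-check the conditional-independence hypothesis with the empty parent sets is a reasonable, if trivially satisfied, point.
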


Next, we consider when $P$ and $Q$ have a common Markov chain structure, or a common tree graphical model structure, for which we can localize the discrepancy onto an edge.

\begin{corollary}[\textbf{Markov Chain}] \label{cor:Hellinger subadditivity MC}
Suppose $P$ and $Q$ are joint distributions on the Markov chain $X = X_1 \to X_2 \to \cdots \to X_n$:
\begin{align*}
P(x) & = P_{X_1} (x_1) \prod_{i=2}^n P_{X_i | X_{i-1}} (x_i | x_{i-1}), \\
Q(x) & = Q_{X_1} (x_1) \prod_{i=2}^n Q_{X_i | X_{i-1}} (x_i | x_{i-1}).
\end{align*}
Then
\begin{multline*}
H^2(P,Q) \leq H^2(P_{X_1}, Q_{X_1})  + H^2(P_{X_1,X_2}, Q_{X_1,X_2}) + \cdots + H^2(P_{X_{n-1},X_n}, Q_{X_{n-1},X_n}).
\end{multline*}
In particular, if $H^2(P,Q) \geq \epsilon$, then there exists some $i$ such that
\[ H^2(P_{X_{i-1},X_i}, Q_{X_{i-1},X_i}) \geq \frac{\epsilon}{n}. \]
\end{corollary}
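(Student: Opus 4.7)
The plan is to derive this corollary as a direct specialization of Theorem~\ref{subadditivity}. A Markov chain $X_1 \to X_2 \to \cdots \to X_n$ is exactly the factorization covered by the theorem with $L = n$, $S_l = \{l\}$ (singletons), and $\Pi_l = \{l-1\}$ for $l \geq 2$. One should verify that the conditions are met: $\Pi_l = \{l-1\} \subset S_1 \cup \cdots \cup S_{l-1} = \{1,\dots,l-1\}$, and the Markov property $X_l \perp X_1,\dots,X_{l-2} \mid X_{l-1}$ is precisely the statement that conditioned on $X_{\Pi_l}$, the super-variable $X_{S_l}$ is independent of everything preceding it. Applying Theorem~\ref{subadditivity} then yields the claimed inequality, since $\{X_{S_l}, X_{\Pi_l}\} = \{X_{l-1}, X_l\}$ for each $l \geq 2$ and the first term is $H^2(P_{X_1}, Q_{X_1})$.

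For the localization statement, I would combine a pigeonhole argument with data processing (monotonicity of squared Hellinger under marginalization). If $H^2(P,Q) \geq \epsilon$, the $n$ terms on the right-hand side of the subadditivity bound sum to at least $\epsilon$, so at least one is $\geq \epsilon/n$. If the dominant term is $H^2(P_{X_{i-1},X_i}, Q_{X_{i-1},X_i})$ for some $i \geq 2$, we are done. If instead the dominant term is $H^2(P_{X_1}, Q_{X_1})$, then since marginalization cannot increase squared Hellinger distance (coupling $P_{X_1}$ and $Q_{X_1}$ as marginals of $P_{X_1,X_2}$ and $Q_{X_1,X_2}$), we get $H^2(P_{X_1,X_2}, Q_{X_1,X_2}) \geq H^2(P_{X_1}, Q_{X_1}) \geq \epsilon/n$, so taking $i = 2$ works.

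There is no serious obstacle here; the work was already done in Theorem~\ref{subadditivity}. The only mildly non-trivial point is handling the degenerate case where all the discrepancy concentrates on the marginal of $X_1$, which is resolved in one line via the data processing inequality for Hellinger distance.
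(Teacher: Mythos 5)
Your proof is correct and follows exactly the route the paper intends: specialize Theorem~\ref{subadditivity} to singleton super-variables $S_l = \{l\}$ with $\Pi_l = \{l-1\}$, then localize by pigeonhole. You are actually a bit more careful than the paper itself, which states the localization as an immediate corollary without spelling out that the root term $H^2(P_{X_1},Q_{X_1})$ must be folded into the $i=2$ pairwise term via the data-processing inequality for squared Hellinger; your one-line treatment of that edge case is the right fix.
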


\begin{corollary}[\textbf{Tree}] \label{cor:Hellinger subadditivity trees}
Suppose $P$ and $Q$ are joint distributions on $X = \{ X_1, \ldots, X_n \}$ with a common tree structure:
\begin{align*}
P(x) & = P_{X_1} (x_1) \prod_{i=2}^n P_{X_i | X_{\pi_i}} (x_i | x_{\pi_i}), \\
Q(x) & = Q_{X_1} (x_1) \prod_{i=2}^n Q_{X_i | X_{\pi_i}} (x_i | x_{\pi_i}),
\end{align*}
where we assume without loss of generality that the tree is rooted at $X_1$ and the nodes are ordered in a breadth first search manner away from the root, with $X_{\pi_i}$ being the parent of $X_i$. Then
\begin{multline*}
H^2(P,Q) \leq H^2(P_{X_1}, Q_{X_1})  + H^2(P_{X_2,X_{\pi_2}}, Q_{X_2,X_{\pi_2}}) + \cdots + H^2(P_{X_n,X_{\pi_n}}, Q_{X_n,X_{\pi_n}}).
\end{multline*}
In particular, if $H^2(P,Q) \geq \epsilon$, then there exists some $i$ such that
\[ H^2(P_{X_i,X_{\pi_i}}, Q_{X_i,X_{\pi_i}}) \geq \frac{\epsilon}{n}. \]
\end{corollary}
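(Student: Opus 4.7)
The statement is essentially a direct specialization of Theorem~\ref{subadditivity}, so the plan is to verify that the tree-structured factorization fits the hypotheses of that theorem and then invoke it.

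The first step is to choose the partition of variables. Since a rooted tree assigns to every non-root node $X_i$ a unique parent $X_{\pi_i}$, the natural choice is the singleton partition $S_l = \{X_l\}$ for $l = 1,\ldots,n$, with $L = n$, $\Pi_1 = \emptyset$, and $\Pi_l = \{X_{\pi_l}\}$ for $l \geq 2$. I then need to check the two conditions required by Theorem~\ref{subadditivity}: (i) $\Pi_l \subset S_1 \cup \cdots \cup S_{l-1}$, and (ii) $X_{S_l}$ is conditionally independent of $X_{S_1},\ldots,X_{S_{l-1}}$ given $X_{\Pi_l}$. For (i), the assumption that the nodes are numbered in BFS order from the root ensures $\pi_l < l$, so $X_{\pi_l} \in S_1 \cup \cdots \cup S_{l-1}$. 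For (ii), in a tree, removing the parent $X_{\pi_l}$ disconnects $X_l$ (together with its subtree) from the rest of the already-processed ancestors and previously-added siblings' subtrees; hence by the global Markov property of tree graphical models, conditioning on $X_{\pi_l}$ makes $X_l$ independent of $X_{S_1},\ldots,X_{S_{l-1}}$. Together with the given factorizations of $P$ and $Q$, this places us exactly in the setting of Theorem~\ref{subadditivity}.

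Applying that theorem then yields
\[ H^2(P,Q) \leq H^2(P_{X_1}, Q_{X_1}) + \sum_{i=2}^n H^2(P_{X_i, X_{\pi_i}}, Q_{X_i, X_{\pi_i}}), \]
which is the claimed bound. For the localization consequence, if $H^2(P,Q) \geq \epsilon$ then at least one of the $n$ terms on the right-hand side must be $\geq \epsilon/n$. If the large term is one of the edge marginals $H^2(P_{X_i, X_{\pi_i}}, Q_{X_i, X_{\pi_i}})$ we are done; if instead the root term $H^2(P_{X_1}, Q_{X_1}) \geq \epsilon/n$, pick any child $X_k$ of the root (which exists whenever $n \geq 2$) and use the data-processing inequality for squared Hellinger under marginalization, namely $H^2(P_{X_1}, Q_{X_1}) \leq H^2(P_{X_k, X_{\pi_k}}, Q_{X_k, X_{\pi_k}})$, to transfer the lower bound onto an edge.

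Since the heavy lifting (the Cauchy--Schwarz step on conditional overlaps and the induction on the number of factors) is already done in Theorem~\ref{subadditivity}, there is no real obstacle here; the only point that deserves explicit justification is the conditional independence statement (ii), which relies on the BFS ordering and the global Markov property of trees, and the small detail of converting a root-marginal bound into an edge-marginal bound in the localization conclusion.
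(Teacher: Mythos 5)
Your proof is correct and follows the route the paper intends: the corollary is a direct instantiation of Theorem~\ref{subadditivity} (and Theorem~\ref{localization}) with the singleton partition $S_l = \{X_l\}$, $L = n$, $\Pi_1 = \emptyset$, and $\Pi_l = \{\pi_l\}$ for $l \geq 2$. The paper states this corollary without a separate argument, so the comparison is between your write-up and the implicit one. Your verification that the BFS ordering satisfies $\pi_l < l$ and that conditioning on the parent separates $X_l$ from all earlier nodes is sound (the earlier nodes are all non-descendants of $X_l$), though strictly speaking the factorization is already given as a hypothesis of the corollary, so that check is belt-and-suspenders.

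One point where you go beyond what the paper records, and rightly so: applying Theorem~\ref{localization} to this decomposition can produce $l=1$, in which case the term exceeding $\epsilon/n$ is the root marginal $H^2(P_{X_1},Q_{X_1})$ rather than an edge marginal, whereas the corollary's conclusion speaks only of terms of the form $H^2(P_{X_i,X_{\pi_i}},Q_{X_i,X_{\pi_i}})$. Your repair — choose any child $X_k$ of the root (e.g.\ $k=2$, since BFS order forces $\pi_2=1$) and use monotonicity of squared Hellinger under marginalization, $H^2(P_{X_1},Q_{X_1}) \leq H^2(P_{X_1,X_k},Q_{X_1,X_k}) = H^2(P_{X_k,X_{\pi_k}},Q_{X_k,X_{\pi_k}})$ — is exactly the right fix for this small gap, which the paper elides.
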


Moreover, we have the more general case of Bayesian network.

\begin{corollary}[\textbf{Bayes-net}] \label{cor:Hellinger subadditivity Bayesnets}
\label{bayesnetlocalization}
Suppose $P$ and $Q$ are joint distributions on $X = \{ X_1, \ldots, X_n \}$ with a common Bayes-net structure:
\begin{align*}
P(x) & = P_{X_1} (x_1) \prod_{i=2}^n P_{X_i | X_{\Pi_i}} (x_i | x_{\Pi_i}), \\
Q(x) & = Q_{X_1} (x_1) \prod_{i=2}^n Q_{X_i | X_{\Pi_i}} (x_i | x_{\Pi_i}),
\end{align*}
where we assume the nodes are topologically ordered, and $X_{\Pi_i}$ is the set of parents of $X_i$. Then
\begin{multline*}
H^2(P,Q) \leq H^2(P_{X_1}, Q_{X_1})  + H^2(P_{X_2,X_{\Pi_2}}, Q_{X_2,X_{\Pi_2}}) + \cdots + H^2(P_{X_n,X_{\Pi_n}}, Q_{X_n,X_{\Pi_n}}).
\end{multline*}
In particular, if $H^2(P,Q) \geq \epsilon$, then there exists some $i$ such that
\[ H^2(P_{X_i,X_{\Pi_i}}, Q_{X_i,X_{\Pi_i}}) \geq \frac{\epsilon}{n}. \]
\end{corollary}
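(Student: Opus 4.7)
The plan is to obtain this corollary as a direct specialization of Theorem~\ref{subadditivity}. Concretely, I would take the partition of the $n$ variables into singletons, setting $L = n$ and $S_l = \{l\}$, so that $X_{S_l} = X_l$ for each $l$. Because the nodes are topologically ordered, the parent set $\Pi_i$ of $X_i$ is contained in $\{1,\ldots,i-1\}$, which matches the requirement $\Pi_l \subset S_1 \cup \cdots \cup S_{l-1}$ in the hypothesis of Theorem~\ref{subadditivity}.

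Next I would verify that this partition meets the Markov condition required by Theorem~\ref{subadditivity}, namely that conditioned on $X_{\Pi_i}$, the variable $X_i$ is independent of $\{X_1,\ldots,X_{i-1}\}$. This is the defining local Markov property of a Bayesian network in topological order: the joint density $P(x) = \prod_i P_{X_i|X_{\Pi_i}}(x_i|x_{\Pi_i})$ implies, by marginalizing out $X_{i+1},\ldots,X_n$ in order, that $P_{X_i \mid X_1,\ldots,X_{i-1}}(x_i \mid x_1,\ldots,x_{i-1}) = P_{X_i \mid X_{\Pi_i}}(x_i \mid x_{\Pi_i})$, and identically for $Q$. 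Thus $P$ and $Q$ satisfy the common factorization structure demanded by Theorem~\ref{subadditivity}.

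Applying Theorem~\ref{subadditivity} to this partition then yields
\[
H^2(P,Q) \le H^2(P_{X_1},Q_{X_1}) + \sum_{i=2}^n H^2(P_{X_i,X_{\Pi_i}}, Q_{X_i,X_{\Pi_i}}),
\]
which is precisely the displayed inequality in the statement. Finally, the localization claim follows immediately by averaging/pigeonhole: the right-hand side is a sum of $n$ nonnegative terms, so if it is at least $\epsilon$, at least one term must be at least $\epsilon/n$ (absorbing the $H^2(P_{X_1},Q_{X_1})$ term into the $i=1$ case by taking $\Pi_1 = \emptyset$ so that $H^2(P_{X_1,X_{\Pi_1}}, Q_{X_1,X_{\Pi_1}}) = H^2(P_{X_1},Q_{X_1})$).

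There is essentially no obstacle here; the only small point to be careful about is the Markov condition for the singleton partition, which as noted above is exactly the standard topological Markov property of Bayes nets and is immediate from the product factorization. All the real work has already been done in Theorem~\ref{subadditivity}.
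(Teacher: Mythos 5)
Your proof is correct and takes essentially the same route the paper intends: you specialize Theorem~\ref{subadditivity} with the singleton partition $L=n$, $S_l=\{l\}$, verify that a topological ordering supplies the required Markov condition (the local Markov property that $X_i$ is independent of its non-descendants $X_1,\ldots,X_{i-1}$ given $X_{\Pi_i}$), and then conclude by pigeonhole. The paper states this corollary as an immediate specialization of Theorem~\ref{subadditivity} and Theorem~\ref{localization} without spelling out the details, and your argument fills them in correctly.
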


Observe that if the Bayes-net structure has in-degree at most $d$, then we can localize the discrepancy onto a subset of at most $d+1$ variables.

Finally, we have the important case in which both $P$ and $Q$ have tree structure, but with respect to different trees. Surprisingly, in this case we still can localize the discrepancy onto a subset of constant size. This will help us design efficient identity tests when the underlying tree structures are unknown and distinct. For this, we need to take a combinatorial detour.

\section{Ordering Nodes For Two Trees}\label{combinatorics}

The main goal in this section is a combinatorial lemma stating that given any two trees on the same set of nodes, there is a way to order the nodes so that each node is "dependent" on only constantly many previous nodes, with respect to both trees. We start with  a definition.

\begin{definition}[\textbf{Dependent Set}]
Suppose we have a tree $\mathcal{T}$ and an ordering of its nodes $X_1, \ldots, X_n$. Let $D_\mathcal{T} (X_i)$, the \textit{Dependent Set} of node $X_i$ with respect to $\mathcal{T}$, be the set of nodes $X_k$, $k < i$, such that the (shortest) path between $X_i$ and $X_k$ in $\mathcal{T}$ does not pass through any other $X_j$ with $j < i$.
\end{definition}

Notice that $D_\mathcal{T} (X_i)$ separates $X_i$ from all the other nodes coming before it. If we regard the nodes as variables, and $\mathcal{T}$ as the underlying tree graphical model, then, conditioning on $D_\mathcal{T} (X_i)$, $X_i$ is independent from all the other variables coming before it. We want those conditioning sets to be small, which motivates the following lemma.

\begin{lemma}[\textbf{Ordering}]
\label{ordering}
Given two trees $\mathcal{T}_P$ and $\mathcal{T}_Q$ on a set $X$ of $n$ nodes, we can order the nodes into $X_1, \ldots, X_n$ so that $D_{\mathcal{T}_P} (X_i) \cup D_{\mathcal{T}_Q} (X_i)$ has cardinality at most $5$, for all $i$.
\end{lemma}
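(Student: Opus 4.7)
The plan is to prove this by strong induction on $n$. The base case $n \le 5$ is immediate, since $|D_{\mathcal{T}_P}(X_i) \cup D_{\mathcal{T}_Q}(X_i)| \le i-1 \le 4$ for any ordering. For the inductive step $n \ge 6$, first select a vertex $v \in X$ as follows: the average of $\deg_{\mathcal{T}_P}(v) + \deg_{\mathcal{T}_Q}(v)$ over $v \in X$ equals $4(n-1)/n < 4$, so some $v$ satisfies $\deg_{\mathcal{T}_P}(v) + \deg_{\mathcal{T}_Q}(v) \le 3$. Since both $\mathcal{T}_P$ and $\mathcal{T}_Q$ are connected on $n \ge 2$ nodes, every vertex has degree at least $1$ in each tree, so $v$ must be a leaf of at least one of them; without loss of generality, $v$ is a leaf of $\mathcal{T}_P$ with $\deg_{\mathcal{T}_Q}(v) \in \{1,2\}$.

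Next I would reduce to a pair of trees on $n-1$ nodes. Set $\mathcal{T}_Q' := \mathcal{T}_Q \setminus v$ if $\deg_{\mathcal{T}_Q}(v) = 1$; if $\deg_{\mathcal{T}_Q}(v) = 2$ with $\mathcal{T}_Q$-neighbors $u_1, u_2$, set $\mathcal{T}_Q' := (\mathcal{T}_Q \setminus v) \cup \{u_1 u_2\}$. In the latter case $\mathcal{T}_Q'$ is still a tree on $X \setminus \{v\}$: removing the degree-$2$ vertex $v$ splits $\mathcal{T}_Q$ into two components that the new edge $u_1 u_2$ reconnects, and $u_1 u_2$ is not already an edge of $\mathcal{T}_Q$ (else $\mathcal{T}_Q$ would contain the triangle $u_1, v, u_2$). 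Applying the inductive hypothesis to the trees $\mathcal{T}_P \setminus v$ and $\mathcal{T}_Q'$ on $X \setminus \{v\}$ yields an ordering $X_1, \ldots, X_{n-1}$ with $|D_{\mathcal{T}_P \setminus v}(X_i) \cup D_{\mathcal{T}_Q'}(X_i)| \le 5$ for each $i < n$; I would then append $X_n := v$.

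To verify the bound for the full ordering, I would use the reformulation that $D_\mathcal{T}(X_i)$ equals $\{X_1, \ldots, X_{i-1}\} \cap N_\mathcal{T}(C_i^\mathcal{T})$, where $C_i^\mathcal{T}$ denotes $X_i$'s connected component in the induced subgraph $\mathcal{T}[\{X_i, X_{i+1}, \ldots, X_n\}]$. For $X_n = v$, $C_n^\mathcal{T} = \{v\}$ in both trees, so $|D_{\mathcal{T}_P}(v) \cup D_{\mathcal{T}_Q}(v)| \le \deg_{\mathcal{T}_P}(v) + \deg_{\mathcal{T}_Q}(v) \le 3$. For $i < n$ the goal is to prove the two identities $D_{\mathcal{T}_P}(X_i) = D_{\mathcal{T}_P \setminus v}(X_i)$ and $D_{\mathcal{T}_Q}(X_i) = D_{\mathcal{T}_Q'}(X_i)$, which transport the inductive bound of $5$ to the original trees. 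The first identity holds because $v$ is a leaf of $\mathcal{T}_P$: its unique $\mathcal{T}_P$-neighbor lies in $X_i$'s $\mathcal{T}_P$-component on $\{X_i, \ldots, X_n\}$ exactly when $v$ does, so $v$ adds no new marked $\mathcal{T}_P$-neighbors.

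The main obstacle is the second identity when $\deg_{\mathcal{T}_Q}(v) = 2$, which requires a case analysis on the markedness of $u_1, u_2$. The key geometric point is that substituting the length-$2$ path $u_1 - v - u_2$ in $\mathcal{T}_Q$ by the direct edge $u_1 u_2$ in $\mathcal{T}_Q'$ preserves all connectivity relations among vertices of $X \setminus \{v\}$. I would check three sub-cases: both $u_1, u_2$ unmarked (the $v$-bridge and the $u_1 u_2$-edge merge the same $\mathcal{T}_Q \setminus v$-components into $X_i$'s component), exactly one of them marked (the marked vertex enters $D_{\mathcal{T}_Q}$ through $v$'s edge iff it enters $D_{\mathcal{T}_Q'}$ through the new edge), or both marked ($v$ becomes isolated in $\mathcal{T}_Q[\{X_i, \ldots, X_n\}]$ while the $u_1 u_2$ edge is absent from $\mathcal{T}_Q'[\{X_i, \ldots, X_{n-1}\}]$ since both its endpoints are outside). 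In each sub-case $X_i$'s component has the same set of marked $\mathcal{T}_Q$-neighbors in the two settings, so $D_{\mathcal{T}_Q}(X_i) = D_{\mathcal{T}_Q'}(X_i)$, closing the induction.
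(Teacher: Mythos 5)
Your proof is correct, but it takes a genuinely different route from the paper's. The paper constructs the ordering \emph{forward}: it roots both trees arbitrarily, picks $X_1, X_2, \ldots$ one at a time, and maintains the invariant that after each step every component of $\mathcal{T}_P \setminus \{X_1,\ldots,X_i\}$ and $\mathcal{T}_Q \setminus \{X_1,\ldots,X_i\}$ has ``boundary'' (adjacent already-picked nodes) of size at most $2$, with a single exception allowed to have boundary size $3$. Maintaining the invariant requires a case analysis on the boundary of the exceptional component and a choice of the next node via lowest common ancestors in the rooted tree. You instead build the ordering \emph{backward} by induction: an averaging argument over $\sum_v \bigl(\deg_{\mathcal{T}_P}(v) + \deg_{\mathcal{T}_Q}(v)\bigr) = 4(n-1)$ produces a vertex $v$ of total degree at most $3$, which is a leaf in one tree and has degree $1$ or $2$ in the other; you delete/contract $v$ to obtain two smaller trees, recurse, and place $v$ last. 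The heart of your argument is then the verification that this surgery preserves all the earlier dependent sets exactly, i.e.\ $D_{\mathcal{T}_P}(X_i) = D_{\mathcal{T}_P\setminus v}(X_i)$ and $D_{\mathcal{T}_Q}(X_i) = D_{\mathcal{T}_Q'}(X_i)$, which you check via the component-boundary reformulation of $D_\mathcal{T}(X_i)$. Your approach avoids rooting and LCA arguments entirely, replacing them with a clean degree-counting and contraction step, and is arguably more transparent.

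One further observation worth noting: as written, your inductive step actually gives more than the lemma claims. The vertex you append always contributes $|D_{\mathcal{T}_P}(v) \cup D_{\mathcal{T}_Q}(v)| \le \deg_{\mathcal{T}_P}(v) + \deg_{\mathcal{T}_Q}(v) \le 3$, and since the averaging argument applies for every $n \ge 2$, you could take the base case to be $n=1$ and the same induction would yield the bound $3$ rather than $5$. (You kept the base case at $n\le 5$, inheriting the weaker bound $4$ or $5$.) This would propagate to Corollary~\ref{twotreeslocalization} and Theorem~\ref{thm:identity testing trees}, shrinking the localized subsets from size $6$ to size $4$ and correspondingly improving the exponents on $|\Sigma|$ and $n$. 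You may want to double-check this strengthening, but it appears to follow directly from your argument.
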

\begin{proof}
We make an auxiliary definition: Given a tree $\mathcal{T}$ on $X$, and any $S \subset X$, $\mathcal{T} \setminus S$ consists of a set of connected components (subtrees, in fact). For each such component $\mathcal{T}'$, we define its \textit{boundary} to be the set of nodes in $S$ adjacent to it.

First, we root $\mathcal{T}_P$ and $\mathcal{T}_Q$, independently and arbitrarily. We then pick the nodes one by one, maintaining the following key invariant after each step $i$: Each component $\mathcal{T}_P'$ of $\mathcal{T}_P \setminus \{ X_1, \ldots, X_i \}$ and $\mathcal{T}_Q'$ of $\mathcal{T}_Q \setminus \{ X_1, \ldots, X_i \}$ has boundary size at most $2$, apart from one exception (aggregated across both $\mathcal{T}_P$ and $\mathcal{T}_Q$), whose boundary size can be $3$.

If this invariant is indeed maintained throughout the picking process, then the lemma is correct. To see why, for each $i$, suppose that $X_{i+1}$ lies in the component $\mathcal{T}_P'$ of $\mathcal{T}_P \setminus \{ X_1, \ldots, X_i \}$, and in the component $\mathcal{T}_Q'$ of $\mathcal{T}_Q \setminus \{ X_1, \ldots, X_i \}$. Then, $D_{\mathcal{T}_P} (X_{i+1})$ is exactly the boundary of $\mathcal{T}_P'$, and $D_{\mathcal{T}_Q} (X_{i+1})$ is exactly the boundary of $\mathcal{T}_Q'$. Therefore, $D_{\mathcal{T}_P} (X_{i+1}) \cup D_{\mathcal{T}_Q} (X_{i+1})$ has size at most $5$.

So we only need to show that the invariant can always be maintained. This can be done inductively. Suppose the invariant holds after picking $X_1, \ldots, X_i$. WLOG, suppose that component $\mathcal{T}_P^*$ of $\mathcal{T}_P \setminus \{ X_1, \ldots, X_i \}$ is the single exception with boundary size allowed up to $3$ (if there is no exception, then just pick $\mathcal{T}_P^*$ to be any component).

Apart possibly from the parent (in $\mathcal{T}_P$) of the root of $\mathcal{T}_P^*$, each node in the boundary of $\mathcal{T}_P^*$ must be a child of some node in $\mathcal{T}_P^*$. Consider the number of such nodes. If there is none, then we can pick $X_{i+1}$ to be any node in $\mathcal{T}_P^*$. If there is one, then we pick $X_{i+1}$ to be its parent. If there are two, then we pick $X_{i+1}$ to be their lowest common ancestor. Finally, if there are three, consider their pairwise lowest common ancestors. We pick $X_{i+1}$ to be the lowest of those. In each case, $\mathcal{T}_P^*$ will be turned into one or more components each with boundary size at most $2$, while the other components of $\mathcal{T}_P \setminus \{ X_1, \ldots, X_i \}$ are undisturbed in passing to $\mathcal{T}_P \setminus \{ X_1, \ldots, X_{i+1} \}$. Moreover, there will be at most one component of boundary size $3$ of $\mathcal{T}_Q \setminus \{ X_1, \ldots, X_{i+1} \}$ (spawned from the component of $\mathcal{T}_Q \setminus \{ X_1, \ldots, X_i \}$ containing $X_{i+1}$). Thus, the invariant is maintained, and the proof of the lemma is complete.
\end{proof}

Using Lemma~\ref{ordering}, we can prove our discrepancy localization result when the two tree-structured distributions have distinct underlying trees.

\begin{corollary}[\textbf{Two Trees}]
\label{twotreeslocalization}
Suppose $P$ and $Q$ are tree-structured joint distributions on a set $X$ of $n$ variables, with possibly distinct underlying trees. Then there exists an ordering of the nodes into $X_1, \ldots, X_n$, and sets $\Pi_i \subset \{1, \ldots , i-1 \}$ of cardinality at most $5$ for all $i$, such that $P$ and $Q$ have the common factorization
\begin{align*}
P(x) & = P_{X_1} (x_1) \prod_{i=2}^n P_{X_i | X_{\Pi_i}} (x_i | x_{\Pi_i}), \\
Q(x) & = Q_{X_1} (x_1) \prod_{i=2}^n Q_{X_i | X_{\Pi_i}} (x_i | x_{\Pi_i}).
\end{align*}
Consequently,
\begin{multline*}
H^2(P,Q) \leq H^2(P_{X_1}, Q_{X_1})  + H^2(P_{X_2,X_{\Pi_2}}, Q_{X_2,X_{\Pi_2}}) + \cdots + H^2(P_{X_n,X_{\Pi_n}}, Q_{X_n,X_{\Pi_n}}).
\end{multline*}
In particular, if $H^2(P,Q) \geq \epsilon$, then there exists some $i$ such that
\[ H^2(P_{X_i,X_{\Pi_i}}, Q_{X_i,X_{\Pi_i}}) \geq \frac{\epsilon}{n}. \]
In other words, we can localize the discrepancy onto a subset of at most $6$ variables.
\end{corollary}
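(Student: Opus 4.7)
The plan is to apply Lemma~\ref{ordering} to obtain an ordering $X_1,\ldots,X_n$ of the nodes, then set $\Pi_i := D_{\mathcal{T}_P}(X_i)\cup D_{\mathcal{T}_Q}(X_i)$, which by the lemma has cardinality at most $5$. With these sets fixed, the main task is to verify that both $P$ and $Q$ admit the common factorization displayed in the statement, after which the Hellinger bound and the localization are immediate consequences of Theorem~\ref{subadditivity} and the pigeonhole principle.

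To establish the factorization for $P$, I would argue that $X_i$ is conditionally independent of $\{X_1,\ldots,X_{i-1}\}\setminus\Pi_i$ given $X_{\Pi_i}$ under $P$. The key observation is that in a tree graphical model, removing any set of nodes from the tree disconnects the graph into subtrees, and a node $X_i$ is $d$-separated from nodes in other subtrees by the boundary of its own subtree. By the very definition of $D_{\mathcal{T}_P}(X_i)$, this set is precisely the boundary (in $\mathcal{T}_P$) of the connected component of $\mathcal{T}_P\setminus\{X_1,\ldots,X_{i-1}\}$ that contains $X_i$, so removing $D_{\mathcal{T}_P}(X_i)$ separates $X_i$ from all $X_j$ with $j<i$ that are not in $D_{\mathcal{T}_P}(X_i)$. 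By the global Markov property for tree graphical models, this yields
\[ P_{X_i\mid X_1,\ldots,X_{i-1}}(x_i\mid x_1,\ldots,x_{i-1}) = P_{X_i\mid X_{D_{\mathcal{T}_P}(X_i)}}(x_i\mid x_{D_{\mathcal{T}_P}(X_i)}). \]
Since $D_{\mathcal{T}_P}(X_i)\subseteq \Pi_i\subseteq\{X_1,\ldots,X_{i-1}\}$, the same conditional equals $P_{X_i\mid X_{\Pi_i}}(x_i\mid x_{\Pi_i})$, giving the desired factorization of $P$ by the chain rule. The identical argument using $D_{\mathcal{T}_Q}(X_i)\subseteq\Pi_i$ yields the analogous factorization of $Q$.

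With the common factorization in hand, I would invoke Theorem~\ref{subadditivity} with the partition $S_l=\{l\}$ and parent sets $\Pi_l$ as defined above to conclude
\[ H^2(P,Q)\leq H^2(P_{X_1},Q_{X_1})+\sum_{i=2}^n H^2(P_{X_i,X_{\Pi_i}},Q_{X_i,X_{\Pi_i}}). \]
Since each summand involves at most $|\{X_i\}\cup X_{\Pi_i}|\leq 6$ variables, the localization claim $H^2(P_{X_i,X_{\Pi_i}},Q_{X_i,X_{\Pi_i}})\geq \epsilon/n$ for some $i$ follows by averaging whenever $H^2(P,Q)\geq \epsilon$.

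The step I expect to demand the most care is the conditional-independence claim underlying the factorization: one must make sure that $\Pi_i$, which is a union of two \emph{different} tree-separators, really does serve as a separator for each of the two trees individually, and that enlarging the conditioning set from $D_{\mathcal{T}_P}(X_i)$ to the superset $\Pi_i$ preserves the conditional distribution of $X_i$ under $P$ (and symmetrically for $Q$). Both facts follow from basic properties of the Markov property — conditioning on a superset of a separator still separates — but this is the only nontrivial verification; everything else reduces to invoking results already established in the preceding sections.
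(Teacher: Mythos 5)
Your proposal is correct and follows essentially the same route as the paper's proof: both apply Lemma~\ref{ordering} to get the ordering, set $\Pi_i = D_{\mathcal{T}_P}(X_i) \cup D_{\mathcal{T}_Q}(X_i)$, verify the common factorization by the tree separation / Markov property argument (including the observation that enlarging a separator to a superset within $\{X_1,\ldots,X_{i-1}\}$ preserves the conditional), and then invoke Theorem~\ref{subadditivity}. Your write-up is slightly more explicit about the graph-separation and weak-union details, but there is no substantive difference from the paper's argument.
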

\begin{proof}
Let $\mathcal{T}_P$ and $\mathcal{T}_Q$ be the underlying tree structures of $P$ and $Q$, respectively. We adopt the node ordering obtained from Lemma~\ref{ordering}, and choose $\Pi_i$ to be set of indices corresponding to the nodes in $D_{\mathcal{T}_P} (X_i) \cup D_{\mathcal{T}_Q} (X_i)$, which has cardinality at most $5$ for all $i$.

For each $i$, the nodes $D_{\mathcal{T}_P} (X_i)$ separate $X_i$ from $ \{ X_1, \ldots, X_{i-1} \} \setminus D_{\mathcal{T}_P} (X_i)$ in $\mathcal{T}_P$. Consequently, conditioning on $D_{\mathcal{T}_P} (X_i)$, $X_i$ is independent from $ \{ X_1, \ldots, X_{i-1} \} \setminus D_{\mathcal{T}_P} (X_i)$ for distribution $P$. Since $X_{\Pi_i} \supset D_{\mathcal{T}_P} (X_i)$, we also have that conditioning on $X_{\Pi_i}$, $X_i$ is independent from $ \{ X_1, \ldots, X_{i-1} \} \setminus X_{\Pi_i}$ for $P$. Since this is true for all $i$, $P$ has the desired factorization into a product of conditionals. The argument for $Q$ is completely symmetric.

The rest of the Corollary follows from Theorem~\ref{subadditivity} and Theorem~\ref{localization}.
\end{proof}

\section{Applications to Identity Testing}\label{testing}

In this section, we use the tools developed in Sections~\ref{hellinger} and~\ref{combinatorics} to construct efficient identity tests for structured high-dimensional distributions. Even though we are ultimately interested in total variation distance, the bulk of our manipulations, as well as the tests administered after localization, are in (squared) Hellinger distance. So we first recall a test for (squared) Hellinger distance. 

\begin{lemma}[\textbf{Hellinger Test}~\cite{DiakonikolasK16}]
\label{hellingertest}
Suppose we have sample access to unknown distributions $P$ and $Q$ over the same set of size $D$. Then, from $\tilde{O}\left(\min\left( D^{2/3}/\epsilon^{8/3}, D^{3/4}/\epsilon^2 \right) \right)$ samples from each, we can distinguish between $P = Q$ vs. $H^2(p,q) \geq \epsilon^2$ with error probability at most $1/3$. This probability can be made an arbitrary $\eta$ at a cost of an additional factor of $O(\log 1/\eta)$ in the sample complexity.
\end{lemma}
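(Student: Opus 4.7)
The plan is to exhibit two separate testers — one achieving $\tilde{O}(D^{2/3}/\epsilon^{8/3})$ samples and the other $\tilde{O}(D^{3/4}/\epsilon^2)$ samples — and then return the verdict of whichever uses fewer samples for the given $(D,\epsilon)$. Both are based on drawing Poissonized samples from $P$ and $Q$ and computing an $L_2$-style statistic on the empirical counts; the two approaches differ in how they relate that statistic to squared Hellinger.

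For the first tester, I would reduce to closeness testing in total variation distance. Since $H^2(P,Q) \le \delta(P,Q)$, the alternative hypothesis $H^2(P,Q) \ge \epsilon^2$ implies $\delta(P,Q) \ge \epsilon^2$, so it suffices to distinguish $P = Q$ from $\delta(P,Q) \ge \epsilon^2$. Plugging the parameter $\alpha := \epsilon^2$ into the Chan--Diakonikolas--Valiant--Valiant closeness tester — which Poissonizes the empirical counts $X_i,Y_i$, computes the statistic $Z = \sum_i \bigl((X_i-Y_i)^2 - X_i - Y_i\bigr)/(X_i+Y_i)$, and applies Chebyshev's inequality — gives $\tilde{O}\bigl(\max(D^{2/3}/\alpha^{4/3},\, D^{1/2}/\alpha^{2})\bigr) = \tilde{O}\bigl(\max(D^{2/3}/\epsilon^{8/3},\, D^{1/2}/\epsilon^{4})\bigr)$ samples. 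The first term dominates precisely when $D \gtrsim \epsilon^{-8}$, which is the regime in which this tester is selected.

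For the second tester, I would apply a Hellinger-specialized test in the ``flattening'' style. The guiding principle is that an $L_2$ statistic has small null variance whenever no domain element carries too much mass. One can artificially enforce a per-element mass cap by splitting any element of estimated mass larger than $1/k$ into several virtual sub-elements of equal mass, using an independent auxiliary sample to estimate the splits; since splitting $(p_i,q_i)$ uniformly into $m$ equal buckets contributes $m \cdot (\sqrt{p_i/m}-\sqrt{q_i/m})^2 = (\sqrt{p_i}-\sqrt{q_i})^2$ to $H^2$, the refinement preserves squared Hellinger. Applying the $L_2$ closeness test on the flattened domain, whose effective size is roughly $D + k$, and optimizing $k \approx D^{1/2}$ to balance dependence on domain size against the inflated variance, produces $\tilde{O}(D^{3/4}/\epsilon^2)$ samples.

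The main obstacle in both approaches is the variance analysis that converts the $L_2$ statistic into a Hellinger guarantee. Whereas $L_2$ closeness directly measures $(p_i-q_i)^2$, squared Hellinger sums $(\sqrt{p_i}-\sqrt{q_i})^2$, so one must translate between them — essentially via the elementary identity $(\sqrt{a}-\sqrt{b})^2(a+b) \asymp (a-b)^2$, combined with the flat-mass condition $a,b \lesssim 1/k$ enforced by the flattening — to show the expected statistic is large whenever $H^2(P,Q) \ge \epsilon^2$. With this expectation lower bound under the alternative and the Poisson-based null variance bound in hand, Chebyshev closes the argument; amplifying from constant error $1/3$ to arbitrary $\eta$ by the standard median-of-independent-repetitions trick accounts for the $O(\log 1/\eta)$ factor.
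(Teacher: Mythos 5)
The paper does not supply its own proof of Lemma~\ref{hellingertest}: it is invoked as a black box from Diakonikolas and Kane, so there is no internal argument to compare against. Your first tester is fine as far as it goes: using $H^2(P,Q)\le\delta(P,Q)$, the alternative $H^2(P,Q)\ge\epsilon^2$ implies $\delta(P,Q)\ge\epsilon^2$, and running the Chan--Diakonikolas--Valiant--Valiant closeness tester with parameter $\alpha=\epsilon^2$ gives $\tilde O\bigl(\max(D^{2/3}/\epsilon^{8/3}, D^{1/2}/\epsilon^4)\bigr)$, whose first term dominates precisely in the regime $D\gtrsim\epsilon^{-8}$ where this branch is selected.

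The second tester has a genuine gap, and it sits exactly where you flag ``the main obstacle.'' After flattening so that all masses are $O(1/k)$, the identity $(\sqrt a - \sqrt b)^2(\sqrt a + \sqrt b)^2 = (a-b)^2$ together with $(\sqrt a + \sqrt b)^2 \le 2(a+b) \lesssim 1/k$ yields $\|P-Q\|_2^2 \lesssim \tfrac{1}{k}\,H^2(P,Q)$ --- an \emph{upper} bound on the $L_2$ distance in terms of Hellinger, which is the wrong direction for your purposes. Large Hellinger does not force large $L_2$: the Hellinger mass can sit on low-probability elements (for instance $p_1=\eta$, $q_1=0$ and $p_i=q_i$ elsewhere gives $H^2\approx\eta/2$ but $\|P-Q\|_2^2=\eta^2$), so a plain $L_2$ closeness test on the flattened domain can fail to detect the alternative. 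What actually closes the argument in Diakonikolas--Kane is a chi-square-normalized statistic (of the CDVV $(X_i-Y_i)^2-X_i-Y_i$ over $X_i+Y_i$ form), together with the unconditional inequality $\chi^2(P,Q)\ge 2H^2(P,Q)$ to lower-bound the expected statistic under the alternative; the flattening is used only to control the statistic's variance, not to convert $H^2$ into an $L_2$ gap. That variance analysis on the flattened domain is the technical core of the lemma, and your sketch does not supply it.
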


We now have everything ready for our identity tests. Recall that $\delta$ denotes the total variation distance.

\begin{theorem}[Identity-Testing for \textbf{Bayes-nets}] \label{thm:identity Bayesnets}
Suppose we have sample access to {\em unknown} joint distributions $P$ and $Q$ on variables $X = \{ X_1, \ldots, X_n \}$ with a {\em known and common} Bayes-net structure:
\begin{align*}
P(x) = P_{X_1} (x_1) \prod_{i=2}^n P_{X_i | X_{\Pi_i}} (x_i | x_{\Pi_i}), \\
Q(x) = Q_{X_1} (x_1) \prod_{i=2}^n Q_{X_i | X_{\Pi_i}} (x_i | x_{\Pi_i}),
\end{align*}
where we assume the nodes are topologically ordered, $X_{\Pi_i}$ is the set of parents of $X_i$, and every variable takes values in some alphabet $\Sigma$ of size $K$. Suppose each $\Pi_i$ has size at most $d$. Then, from $\tilde{O}(K^{3/4\cdot(d+1)} \cdot \frac{n}{\epsilon^2})$ samples and in $\tilde{O} (K^{3/4\cdot(d+1)} \cdot \frac{n^2}{\epsilon^2})$ time, we can distinguish between $P=Q$ vs. $\delta(P,Q) \geq \epsilon$ with error probability at most $\frac{1}{3}$.

\medskip Similarly, suppose that we are given sample access to {\em unknown} joint distributions $P$ and $Q$ on a {\em common but unknown} Bayes-net structure whose maximum in-degree $d$ is known. From $\tilde{O} (K^{3/4\cdot(d+1)} \cdot \frac{n}{\epsilon^2})$ samples and in $\tilde{O} (K^{3/4\cdot(d+1)} \cdot \frac{ n^{d+2} }{\epsilon^2})$ time, we can distinguish between $P=Q$ vs. $\delta(P,Q) \geq \epsilon$ with error probability at most $\frac{1}{3}$.
\end{theorem}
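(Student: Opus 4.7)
The plan is to reduce $d_{\rm TV}$-identity testing for Bayes-nets to a collection of local squared-Hellinger identity tests on small marginals, by combining the Hellinger subadditivity of Corollary~\ref{cor:Hellinger subadditivity Bayesnets} with the low-dimensional Hellinger tester of Lemma~\ref{hellingertest}. Since $\delta(P,Q) \le \sqrt{2}\, H(P,Q)$, the hypothesis $\delta(P,Q) \ge \epsilon$ implies $H^2(P,Q) \ge \epsilon^2/2$, and then by Corollary~\ref{cor:Hellinger subadditivity Bayesnets} there must exist some node $i$ whose closed-neighborhood marginals satisfy $H^2(P_{X_i,X_{\Pi_i}},Q_{X_i,X_{\Pi_i}}) \ge \epsilon^2/(2n)$. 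Conversely, $P=Q$ implies all such marginals coincide. So a conjunction of local Hellinger testers, one per neighborhood, distinguishes the two cases.

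For the known-structure part, I would run, for each $i$, the Hellinger test of Lemma~\ref{hellingertest} on the empirical marginals of $P$ and $Q$ restricted to $\{X_i\} \cup X_{\Pi_i}$, with effective domain size $D \le K^{d+1}$ and squared-Hellinger threshold $\epsilon^2/(2n)$, i.e.\ $\epsilon' = \epsilon/\sqrt{2n}$. The $D^{3/4}/\epsilon'^{\,2}$ branch of Lemma~\ref{hellingertest} then gives per-test sample complexity $\tilde{O}(K^{3/4\cdot(d+1)} \cdot n/\epsilon^2)$. Crucially, all $n$ marginal tests can reuse the same global sample pool (just project each sample onto the requisite coordinates), and boosting each tester's failure probability to $1/(3n)$ via $O(\log n)$ repetition contributes only a polylog factor, absorbed by $\tilde{O}$. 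Declaring ``$P=Q$'' iff all $n$ local tests accept yields overall error probability $\le 1/3$ by a union bound. Runtime is quasi-linear in the sample size per test and there are $n$ tests, giving $\tilde{O}(K^{3/4\cdot(d+1)} \cdot n^2/\epsilon^2)$.

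For the unknown-structure part, since we know only the in-degree bound $d$, I would enumerate all subsets $S \subseteq \{1,\ldots,n\}$ with $|S| \le d+1$, of which there are $O(n^{d+1})$, and run the same Hellinger test on the empirical marginals on each such $S$, at the same threshold $\epsilon^2/(2n)$. The subadditivity guarantee is unaffected: if $P=Q$ all such marginals match, while if $\delta(P,Q) \ge \epsilon$ then the (unknown) true neighborhood $\{i\} \cup \Pi_i$ witnesses a large local Hellinger distance, and because it is among the enumerated $S$'s it will be detected. Boosting each test's failure probability to $1/(3n^{d+1})$ now costs an $O((d+1)\log n) = O(\log n)$ factor, again absorbed by $\tilde{O}$, so the sample complexity remains $\tilde{O}(K^{3/4\cdot(d+1)} \cdot n/\epsilon^2)$; the running time becomes $\tilde{O}(K^{3/4\cdot(d+1)} \cdot n^{d+2}/\epsilon^2)$, reflecting the $O(n^{d+1})$ enumerated subsets each processed in quasi-linear time in the sample size.

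The main obstacle is verifying that the two branches of Lemma~\ref{hellingertest} are dispatched correctly: with $D = K^{d+1}$ and $\epsilon'^{\,2} = \Theta(\epsilon^2/n)$, the $D^{3/4}/\epsilon'^{\,2}$ branch evaluates to $\tilde{O}(K^{3/4\cdot(d+1)} \cdot n/\epsilon^2)$ while the $D^{2/3}/\epsilon'^{\,8/3}$ branch yields $\tilde{O}(K^{2/3\cdot(d+1)} \cdot n^{4/3}/\epsilon^{8/3})$, worse in both $n$ and $\epsilon$; so the stated bound comes from always choosing the former branch, and one must check that the polylog factors from enumerating $n$ (resp.\ $n^{d+1}$) subsets and from union-bounding are indeed absorbed by the $\tilde{O}$ without inflating the polynomial factors in $n$.
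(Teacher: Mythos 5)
Your proposal is correct and follows essentially the same strategy as the paper: localize via Corollary~\ref{bayesnetlocalization}, run the Hellinger tester of Lemma~\ref{hellingertest} at threshold $\epsilon^2/(2n)$ on each candidate neighborhood (the $n$ known neighborhoods in part one, all $O(n^{d+1})$ size-$(d+1)$ subsets in part two), boost per-test failure probability so a union bound gives overall error $1/3$, and observe that the $D^{3/4}/\epsilon'^2$ branch with $D=K^{d+1}$ and $\epsilon'^2 = \Theta(\epsilon^2/n)$ yields the stated sample bound. Your added remarks about sample reuse across subtests and the branch comparison are consistent with (and slightly more explicit than) the paper's treatment.
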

\begin{proof}
We first prove the first part of the theorem, where the shared structure of the Bayes-nets is known. For each $i = 1, \ldots, n$, we run the Hellinger test to distinguish between
\[ P_{X_i,X_{\Pi_i}} = Q_{X_i,X_{\Pi_i}} \quad \text{vs.} \quad H^2(P_{X_i,X_{\Pi_i}}, Q_{X_i,X_{\Pi_i}}) \geq \frac{\epsilon^2}{2n}. \]
We return ``$P=Q$'' if and only if each of those $n$ subtests returns equality.

Note that the domain size for each subtest is at most $K^{d+1}$. With the right choice of $O(\log n)$ factor in our sample complexity, Lemma~\ref{hellingertest} implies that each of the $n$ sub-tests has error probability at most $\frac{1}{3n}$. Consequently, with probability at least $\frac{2}{3}$, all those subtests give correct answers (when $H^2(P_{X_i,X_{\Pi_i}}, Q_{X_i,X_{\Pi_i}})$ is strictly between $0$ and $\frac{\epsilon^2}{2n}$, either answer is deemed correct). It suffices to show that our test is correct in this situation.

If $P=Q$, then $P_{X_i,X_{\Pi_i}} = Q_{X_i,X_{\Pi_i}}$ for every $i$. So every subtest will return equality, and we will return ``$P=Q$".

If $\delta(P, Q) \geq \epsilon$, then $H(P,Q) \geq \frac{\epsilon}{\sqrt{2}}$, and so $H^2(P,Q) \geq \frac{\epsilon^2}{2}$. By Corollary~\ref{bayesnetlocalization}, there exists some $i$ such that $H^2(P_{X_i,X_{\Pi_i}}, Q_{X_i,X_{\Pi_i}}) \geq \frac{\epsilon^2}{2n}$. Consequently, the $i$th subtest will not return equality, and we will return ``$\delta (P,Q) \geq \epsilon$".

The running time bound follows from the fact that we perform $n$ Hellinger tests, each of which takes time quasi-linear in the sample size.

\medskip The second part of our theorem follows similarly. From the argumentation above it follows that, if $\delta(P, Q) \geq \epsilon$, then there exists some set $S$ of variables of size $|S|=d+1$ such that $H^2(P_{X_S}, Q_{X_S}) \geq \frac{\epsilon^2}{2n}$. On the other hand, it is obvious that, if $P=Q$, then for all sets $S$: $P_{X_S} = Q_{X_S}$. So we can run the Hellinger test on every set $S$ of $d+1$ variables to distinguish between:
\[ P_{X_S} = Q_{X_S} \quad \text{vs.} \quad H^2(P_{X_S}, Q_{X_S}) \geq \frac{\epsilon^2}{2n}. \]
We return ``$P=Q$'' if and only if all of these tests return equality. Since we are running $O(n^{d+1})$ Hellinger tests, we want that each has success probability $1-n^{(-\Omega(d))}$ to do a union bound. This results in an extra factor of $O(d)$ in the sample complexity compared to the known structure case analyzed above, where we only performed $n$ Hellinger tests. The running time bound follows because we run  $O(n^{d+1})$ tests, each of which takes quasi-linear time in the sample.
\end{proof}

\begin{theorem}[Identity-Testing for \textbf{Tree Structured Bayes-nets}] \label{thm:identity testing trees}
Suppose we have sample access to unknown Bayes-nets $P$ and $Q$ on variables $X = \{ X_1, \ldots, X_n \}$ with {\em unknown, and possibly distinct, tree structures}. Suppose that every variable takes values in some alphabet $\Sigma$ of size $K$.

Then, from $\tilde{O}(K^{4.5} \cdot \frac{n}{\epsilon^2})$ samples and in $\tilde{O}(K^{4.5} \cdot \frac{n^7}{\epsilon^2})$ time, we can distinguish between $P=Q$ vs. $\delta(P,Q) \geq \epsilon$ with error probability at most $\frac{1}{3}$. The dependence of the sample complexity on $n$ and $\epsilon$ is tight up to logarithmic factors. 
\end{theorem}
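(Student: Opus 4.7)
The plan is to mirror the second (unknown-structure) part of Theorem~\ref{thm:identity Bayesnets}, but now using Corollary~\ref{twotreeslocalization} in place of Corollary~\ref{bayesnetlocalization} so that the relevant local neighborhoods have size at most $6$ instead of $d+1$. The key point is that, although we do not know the two trees $\mathcal{T}_P, \mathcal{T}_Q$, Corollary~\ref{twotreeslocalization} guarantees the mere existence of an ordering of the variables together with parent sets $\Pi_i$ of size at most $5$ that produce a common factorization for $P$ and $Q$. Consequently, if $\delta(P,Q) \ge \epsilon$, then $H^2(P,Q) \ge \epsilon^2/2$ (via $\delta \le \sqrt{2}\, H$), and Corollary~\ref{twotreeslocalization} yields some index $i$ with $H^2(P_{X_i,X_{\Pi_i}},Q_{X_i,X_{\Pi_i}}) \ge \epsilon^2/(2n)$; in particular, there is some set $S \subseteq \{1,\ldots,n\}$ with $|S|\le 6$ such that $H^2(P_{X_S},Q_{X_S}) \ge \epsilon^2/(2n)$.

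Given this existential guarantee, the algorithm ignores the unknown trees and simply enumerates every subset $S$ of $\{1,\ldots,n\}$ of size exactly $6$ (there are $\binom{n}{6} = O(n^6)$ such subsets), and runs the Hellinger tester of Lemma~\ref{hellingertest} on each marginal pair $(P_{X_S},Q_{X_S})$ to distinguish $P_{X_S}=Q_{X_S}$ from $H^2(P_{X_S},Q_{X_S}) \ge \epsilon^2/(2n)$. The algorithm outputs ``$P=Q$'' iff every subtest returns equality. Correctness in the ``yes'' case is trivial since marginals of equal distributions are equal; correctness in the ``no'' case follows from the localization above, noting that any set $S'$ with $|S'|\le 6$ where the $H^2$-distance is large can be extended to a set of size exactly $6$ without decreasing the Hellinger distance between marginals (since restricting to a coarser marginal cannot increase Hellinger distance, equivalently marginalizing more variables only decreases it, so enlarging the set only increases the distance between marginals).

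For sample complexity, each Hellinger subtest is on a domain of size $K^6$ with Hellinger-squared threshold $\epsilon^2/(2n)$, so by Lemma~\ref{hellingertest} it uses
\[ \tilde{O}\!\left((K^6)^{3/4} \cdot \frac{1}{\epsilon^2/(2n)}\right) = \tilde{O}\!\left(K^{4.5}\cdot \frac{n}{\epsilon^2}\right) \]
samples, and since all $O(n^6)$ subtests can reuse the same samples, this is also the overall sample complexity. To union bound over $O(n^6)$ subtests we boost each subtest's failure probability to $1/(3 n^6)$, paying only an $O(\log n)$ factor absorbed in the $\tilde{O}$. For running time, each subtest runs in quasi-linear time in the sample size, giving $\tilde{O}(K^{4.5} n^7/\epsilon^2)$ in total. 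The matching lower bound of $\Omega(n/\epsilon^2)$ on the $(n,\epsilon)$-dependence is inherited from \cite{DaskalakisDK16}, who establish it even when one of the two tree-structured distributions is known.

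The only nontrivial point is the observation that we need not know $\Pi_i$ from Lemma~\ref{ordering}: it suffices to try all size-$6$ subsets because the localization corollary only asserts existence. I expect no serious obstacle beyond confirming that marginalizing to a \emph{superset} of the $6$-variable subset produced by Corollary~\ref{twotreeslocalization} keeps the Hellinger distance at least $\epsilon^2/(2n)$, which is standard (Hellinger distance is monotone under coarsening/marginalization, hence non-decreasing when we keep more variables). This, plus the sample/time accounting above, completes the argument.
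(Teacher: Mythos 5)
Your proof is correct and takes essentially the same approach as the paper: localize via Corollary~\ref{twotreeslocalization}, enumerate all $O(n^6)$ candidate $6$-variable subsets, run the Hellinger tester of Lemma~\ref{hellingertest} on each with threshold $\epsilon^2/(2n)$ and a union bound, then cite~\cite{DaskalakisDK16} for the matching lower bound. The only cosmetic difference is that you test subsets of size exactly $6$ (justified by monotonicity of Hellinger under marginalization) while the paper tests all subsets of size at most $6$; this changes nothing.
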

\begin{proof}
For each subset $X_S$ of at most $6$ variables, we run the Hellinger test to distinguish between
\[ P_{X_S} = Q_{X_S} \quad \text{vs.} \quad H^2(P_{X_S}, Q_{X_S}) \geq \frac{\epsilon^2}{2n}. \]
We return ``$P=Q$" if and only if each of those $\Theta(n^6)$ subtests returns equality.

Note that the domain size for each subtest is at most $K^6$. With the right choice of $O(\log n)$ factor in our sample complexity, Lemma~\ref{hellingertest} implies that each of the sub-tests has error probability at most $\frac{1}{3n^6}$. Consequently, with probability at least $\frac{2}{3}$, all those subtests give correct answers. It suffices to show that our test is correct in this situation.

If $P=Q$, then $P_{X_S} = Q_{X_S}$ for every $X_S$. So every subtest will return equality, and we will return ``$P=Q$".

If $\delta(P, Q) \geq \epsilon$, then $H(P,Q) \geq \frac{\epsilon}{\sqrt{2}}$, and so $H^2(P,Q) \geq \frac{\epsilon^2}{2}$. By Corollary~\ref{twotreeslocalization}, there exists some $X_S$ of size at most $6$ such that $H^2(P_{X_S}, Q_{X_S}) \geq \frac{\epsilon^2}{2n}$. This will be detected by one of the subtests, and we will return ``$\delta (P,Q) \geq \epsilon$".

The running time bound follows from the fact that we perform $\Theta (n^6)$ Hellinger tests, each of which takes time quasi-linear in the sample size.

\medskip The tightness of the sample complexity follows directly from Theorem 18 and Remark 2 of~\cite{DaskalakisDK16}. They show that, given sample access to a ferromagnetic Ising model $P$, which is known to have a tree structure, one needs $\Omega(n/\epsilon^2)$ samples to distinguish whether $P$ equals the uniform distribution over $\{0,1\}^n$ vs $P$ being $\epsilon$-far in total variation distance from the uniform distribution. Since Ising models on trees and Bayes-nets on trees with binary alphabets are equivalent, the lower bound follows.
\end{proof}

\begin{theorem}[Goodness-of-fit for \textbf{Product Distributions over the Hypercube}] \label{thm:goodness of fit testing product measures}
Suppose $P$ and $Q$ are product distributions over $\{0,1\}^n$. We are given sample access to $P$, while $Q$ is known exactly. Then, from $O\left({\sqrt{n} \over \epsilon^2}\right)$ samples and in $O\left({n^{1.5} \over \epsilon^2}\right)$ time, we can distinguish between $P=Q$ vs. $\delta(P,Q) \geq \epsilon$ with error probability at most $\frac{1}{3}$. The sample complexity is optimal up to $O(1)$ factors.
\end{theorem}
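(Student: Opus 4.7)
The plan is to reduce goodness-of-fit to estimating $\sum_i (p_i-q_i)^2/q_i$ and then apply a bias-corrected $\chi^2$-statistic in the spirit of~\cite{AcharyaDK15}, using Corollary~\ref{cor:Hellinger subadditivity product measures} as the bridge between total variation and this $\chi^2$-like quantity. Write $p_i = \mathbb{E}_P[X_i]$ and $q_i = \mathbb{E}_Q[X_i]$. Since flipping the $i$-th bit (replacing $X_i$ by $1-X_i$ in both $P$ and $Q$) preserves product structure as well as total variation and Hellinger distance, I would first reduce to the case $q_i \le 1/2$ for every $i$. The elementary identity $(\sqrt{a}-\sqrt{b})^2 = (a-b)^2/(\sqrt{a}+\sqrt{b})^2$, applied to both $(a,b)=(p_i,q_i)$ and $(a,b)=(1-p_i,1-q_i)$, then yields the Bernoulli bound $H^2(\mathrm{Ber}(p),\mathrm{Ber}(q)) \le (p-q)^2/q$ whenever $q \le 1/2$. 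Combining with Corollary~\ref{cor:Hellinger subadditivity product measures} gives
\[ \sum_{i=1}^n \frac{(p_i-q_i)^2}{q_i} \;\ge\; H^2(P,Q); \]
so if $P=Q$ every summand vanishes, whereas if $\delta(P,Q) \ge \epsilon$ the sum is at least $\epsilon^2/2$.

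Given $m$ samples, let $N_i$ count the samples whose $i$-th coordinate equals $1$; these are independent $\mathrm{Bin}(m, p_i)$ random variables. Since $\mathbb{E}[(N_i - mq_i)^2] = mp_i(1-p_i) + m^2(p_i-q_i)^2$ and $\mathbb{E}[N_i(m-N_i)/(m-1)] = mp_i(1-p_i)$, the statistic
\[ Z \;=\; \sum_{i=1}^n \frac{(N_i - m q_i)^2 \;-\; N_i(m-N_i)/(m-1)}{m^2\, q_i} \]
is unbiased for $\sum_i (p_i-q_i)^2/q_i$. The tester outputs ``$P=Q$'' iff $Z < \epsilon^2/4$. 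By independence of the $N_i$, $\mathrm{Var}(Z) = \sum_i \mathrm{Var}(Z_i)$, and a direct computation via the factorial moments of the binomial yields a bound of the form $\mathrm{Var}(Z_i) = O(p_i^2/(m^2 q_i^2) + p_i(p_i-q_i)^2/(m q_i^2))$. Under the null only the first term survives and sums to $O(n/m^2)$, which is $o(\epsilon^4)$ once $m = \Omega(\sqrt n/\epsilon^2)$; under the alternative, splitting coordinates by $p_i \le 2 q_i$ vs.\ $p_i > 2 q_i$ shows that both terms are dominated by $O(\mathbb{E}[Z]/m + n/m^2)$, which is $o(\mathbb{E}[Z]^2)$ in the same sample regime. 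Chebyshev's inequality then separates the two hypotheses with error at most $1/3$, and the $O(n^{1.5}/\epsilon^2)$ running time follows because each of the $m$ samples is processed in $O(n)$ time.

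The matching lower bound is classical: distinguishing the uniform distribution on $\{0,1\}^n$ from a random product distribution with coordinate parameters $1/2 \pm c\epsilon/\sqrt n$ (with independent random signs) requires $\Omega(\sqrt n/\epsilon^2)$ samples, by the standard Ingster--Paninski second-moment argument. This instance already fits our setup with $Q$ the known uniform distribution, so the upper bound is tight up to $O(1)$ factors.

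The main obstacle is the variance bound in the alternative case, since the ratio $p_i/q_i$ is a priori unbounded: the saving grace is that whenever $p_i/q_i$ is large, $\mathbb{E}[Z_i] = (p_i-q_i)^2/q_i$ is correspondingly large, keeping the per-coordinate signal-to-noise ratio controlled. Executing this dichotomy cleanly is structurally identical to the analysis of the $\chi^2$-tester in~\cite{AcharyaDK15}, adapted from multinomial to binomial counts; the caveat noted in the introduction---that $\sum_i p_i$ and $\sum_i q_i$ need not equal $1$---does not create any issues, because the estimator treats each coordinate independently.
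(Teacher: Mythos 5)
Your high-level plan matches the paper's: use Corollary~\ref{cor:Hellinger subadditivity product measures} together with a per-coordinate Bernoulli bound to convert $\delta(P,Q)\ge\epsilon$ into $\sum_i (p_i - q_i)^2/q_i \ge \epsilon^2/2$, and then test that sum with a bias-corrected $\chi^2$-style statistic in the spirit of~\cite{AcharyaDK15}. Your derivation of $H^2(\mathrm{Ber}(p),\mathrm{Ber}(q)) \le (p-q)^2/q$ for $q \le 1/2$, your use of plain binomial counts in place of the paper's Poissonization, and the remark that $\sum_i p_i$, $\sum_i q_i\ne 1$ is harmless are all fine.

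There is, however, a genuine gap in the variance analysis under the alternative: you have no lower bound on the $q_i$, and the dichotomy $p_i\le 2q_i$ vs.\ $p_i>2q_i$ does not repair this. Concretely, take $p_i = 1/n$ and $q_i = 1/n^2$ for all $i$, so that $H^2(P,Q) = 1-(1-\Theta(1/n))^n=\Theta(1)$ and hence $\delta(P,Q)\ge\Theta(1)$; let $\epsilon$ be a suitable constant and $m=\Theta(\sqrt n)$. Then $\mathbb{E}[Z_i]=(p_i-q_i)^2/q_i=\Theta(1)$, but $\mathrm{Var}(Z_i)=\Theta\bigl(p_i^2/(m^2q_i^2)\bigr)=\Theta(n)$. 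The blow-up is tangible: since $mp_i=\Theta(1/\sqrt n)$, the event $N_i\ge 2$ has probability $\Theta(1/n)$ and produces $Z_i=\Theta(n)$, while otherwise $Z_i$ is roughly $-2/\sqrt{n}$ (if $N_i=1$) or $q_i$ (if $N_i=0$). A short calculation shows that with constant probability (roughly $e^{-1/2}$) no coordinate sees $N_i\ge 2$, in which case $Z\approx -2<\epsilon^2/4$ and your test declares $P=Q$, even though $\delta(P,Q)=\Theta(1)$. The source of the failure is that your claimed bound, that in the case $p_i>2q_i$ the term $p_i^2/(m^2q_i^2)$ is dominated by $O(\mathbb{E}[Z_i]/m+1/m^2)$, implicitly requires $q_i=\Omega(1/m)$ or at least $\mathbb{E}[Z_i]=O(nq_i)$, and neither holds here.

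This is exactly the point at which the paper does extra preprocessing that your proposal omits. After the bit-flip reduction to $q_i\le 1/2$, the paper XORs coordinate $i$ of both $P$ and $Q$ with an independent $\mathrm{Bernoulli}(\epsilon/(cn))$ whenever $q_i<\epsilon/(cn)$. This keeps both distributions product, preserves $P=Q$ under the null, moves each of $P$ and $Q$ by at most $\epsilon/c$ in total variation (so the alternative's gap degrades only to $(1-2/c)\epsilon$), and enforces $q_i\ge\epsilon/(cn)$, which is precisely the floor the~\cite{AcharyaDK15}-style variance bound requires. Add this step and your argument goes through; without it, the test as stated is incorrect.
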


\begin{prevproof}{Theorem}{thm:goodness of fit testing product measures}
We start with a useful lemma:

\begin{lemma}\label{lemma:Bernoulli hellinger upper bound}
If $X$ and $Y$ are Bernoulli random variables with expectations $p$ and $q$, then
$$H^2(X,Y) \le {(p-q)^2 \over 2}\left({1 \over q} + {1 \over 1-q}\right).$$
\end{lemma}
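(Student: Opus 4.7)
The plan is to expand the squared Hellinger distance for two Bernoulli distributions and then bound each summand by a factor of the form $(p-q)^2$ times a constant depending only on $q$. Concretely, by the definition of squared Hellinger distance,
\[
H^2(X,Y) = \tfrac{1}{2}\bigl(\sqrt{p}-\sqrt{q}\bigr)^2 + \tfrac{1}{2}\bigl(\sqrt{1-p}-\sqrt{1-q}\bigr)^2,
\]
so it suffices to bound the two summands separately.

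The key algebraic step I would use is the identity $(\sqrt{a}-\sqrt{b})^2 = \tfrac{(a-b)^2}{(\sqrt{a}+\sqrt{b})^2}$, valid whenever $a,b \ge 0$ and not both zero. Applying it to the two summands gives
\[
H^2(X,Y) = \frac{(p-q)^2}{2(\sqrt{p}+\sqrt{q})^2} + \frac{(p-q)^2}{2(\sqrt{1-p}+\sqrt{1-q})^2}.
\]
Now I would lower-bound each denominator by dropping the contribution from $p$: since $\sqrt{p}\ge 0$, one has $(\sqrt{p}+\sqrt{q})^2 \ge q$, and similarly $(\sqrt{1-p}+\sqrt{1-q})^2 \ge 1-q$. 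Plugging these in yields exactly
\[
H^2(X,Y) \le \frac{(p-q)^2}{2}\!\left(\frac{1}{q} + \frac{1}{1-q}\right),
\]
as claimed.

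There is no real obstacle here; the only mild care is in handling the edge cases $q\in\{0,1\}$, where the bound becomes vacuous (the right-hand side is $+\infty$) and there is nothing to prove, while the lemma is applied in the main argument only when $q\in(0,1)$. In particular, the bound matches the familiar fact that the squared Hellinger distance between two close Bernoullis behaves like $\tfrac{(p-q)^2}{4q(1-q)}$ near $p=q$, which is also the leading-order behavior of $\tfrac{1}{2}\chi^2(p\|q)$, consistent with what one would later want in chaining Hellinger to chi-squared estimators when proving Theorem~\ref{thm:goodness of fit testing product measures}.
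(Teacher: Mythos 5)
Your proof is correct, and it takes a genuinely different route from the paper. The paper starts from the form $H^2(X,Y) = 1 - \sqrt{pq} - \sqrt{(1-p)(1-q)}$, substitutes $p = q + x$, factors out $q$ and $1-q$, and invokes the Taylor-type inequality $\sqrt{1+t} \ge 1 + \tfrac{t}{2} - \tfrac{t^2}{2}$ for $t \ge -1$ (checking that the constraints $p,q \in [0,1]$ make both arguments lie in $[-1,\infty)$). You instead start from the equivalent form $H^2(X,Y) = \tfrac{1}{2}(\sqrt{p}-\sqrt{q})^2 + \tfrac{1}{2}(\sqrt{1-p}-\sqrt{1-q})^2$, use the conjugate identity $(\sqrt{a}-\sqrt{b})^2 = (a-b)^2/(\sqrt{a}+\sqrt{b})^2$, and then lower-bound each denominator by dropping the $p$-terms. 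Your approach is arguably slicker: it avoids the auxiliary polynomial inequality and the substitution bookkeeping, and it makes the source of each of the two terms $\tfrac{1}{q}$ and $\tfrac{1}{1-q}$ transparent. The paper's approach has the minor virtue of being a mechanical Taylor expansion that generalizes to higher-order refinements if one wanted a tighter constant, but for this lemma both are equally elementary and both correctly handle the constraint structure. Your remark about the degenerate cases $q \in \{0,1\}$ is also accurate and unproblematic.
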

\begin{prevproof}{Lemma}{lemma:Bernoulli hellinger upper bound}
Suppose $p=q+x$ for some $x$. From the definition of Hellinger distance we have that:
\begin{align*}
H^2(X,Y)&=1-\sqrt{q \cdot (q+x)} - \sqrt{(1-q) \cdot (1-q-x)}\\
&=1-q \cdot \sqrt{1+{x \over q}} - (1-q) \cdot \sqrt{1-{x \over 1-q}}\\
&\le 1-q \cdot \left(1+{x \over 2q} - {x^2 \over 2 q^2}\right) - (1-q)\cdot \left(1-{x \over 2(1-q)}-{x^2 \over 2(1-q)^2}\right)\\
&= {x^2 \over 2 q} + {x^2 \over 2(1-q)}={x^2 \over 2}\left({1 \over q} + {1 \over 1-q}\right).
\end{align*}
The inequality in the above derivation follows from the following lemma, whose simple proof is omitted, and the realization that the constraints $p,q \in [0,1]$ and $p=q+x$ imply that ${x \over q} \ge -1$ and $-{x \over 1-q} \ge -1$.
\begin{lemma} For all $t \ge -1$: $\sqrt{1+t} \ge 1+{t \over 2}-{t^2 \over 2}$.
\end{lemma}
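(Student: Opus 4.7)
The plan is to reduce the inequality to the non-negativity of a polynomial with an obvious sign, via the substitution $u=\sqrt{1+t}$. Since $t\ge -1$, we have $u\ge 0$ and $t=u^2-1$, and the claim $\sqrt{1+t}\ge 1+t/2-t^2/2$ becomes
\[
 u - 1 - \tfrac{u^2-1}{2} + \tfrac{(u^2-1)^2}{2} \;\ge\; 0.
\]
I would expand the left-hand side in powers of $u$: the constant terms $-1+\tfrac12+\tfrac12$ cancel, the $u^2$ contributions combine to $-\tfrac32 u^2$, and the $(u^2-1)^2/2$ contributes a $u^4/2$. This leaves the polynomial $P(u) = \tfrac12 u^4 - \tfrac32 u^2 + u = \tfrac{u}{2}(u^3 - 3u + 2)$.

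The main (really the only) step is to factor $u^3-3u+2$. Spotting the root $u=1$ and dividing, I would get $u^3-3u+2 = (u-1)(u^2+u-2) = (u-1)^2(u+2)$. Hence
\[
 P(u) \;=\; \tfrac{u}{2}\,(u-1)^2\,(u+2),
\]
and since $u\ge 0$ every factor is non-negative, completing the proof. Equality holds exactly when $u\in\{0,1\}$, i.e.\ at $t=-1$ and $t=0$, which matches the Taylor picture (the bound is second-order tight at $0$ and the endpoint $t=-1$ is a boundary extremum).

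I do not expect any real obstacle here: the substitution is the only non-mechanical idea, and once made the cubic factors cleanly. As a sanity fallback, one could instead split into the two cases $-1\le t\le 2$ and $t>2$: in the latter range $1+t/2-t^2/2=-(t-2)(t+1)/2<0$, so the inequality is trivial; in the former range both sides are non-negative and squaring reduces the claim to $t^2(3-t)(t+1)\ge 0$, which holds for $t\in[-1,3]\supset[-1,2]$. Either route is short, but the $u$-substitution route produces the cleaner closed form $(u/2)(u-1)^2(u+2)\ge 0$ and so is the one I would present.
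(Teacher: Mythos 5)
Your proof is correct: the substitution $u=\sqrt{1+t}$ turns the claim into $\tfrac{u}{2}(u-1)^2(u+2)\ge 0$ for $u\ge 0$, and the algebra (including the factorization $u^3-3u+2=(u-1)^2(u+2)$ and the equality cases $t=-1,0$) checks out; the case-splitting fallback is also sound. The paper explicitly omits its proof of this lemma (``whose simple proof is omitted''), so there is no authorial argument to compare against -- your argument is a complete and clean way to fill in that omitted detail.
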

\end{prevproof}

Now let us turn to the proof of Theorem~\ref{thm:goodness of fit testing product measures}. Suppose distribution $P$ samples a collection $X=(X_1, \ldots, X_n)$ of mutually independent binary random variables, and $Q$ samples a collection $Y=(Y_1, \ldots, Y_n)$ of mutually independent binary random variables. Suppose that $\mathbf{E}[X_i] = p_i$ and $\mathbf{E}[Y_i] = q_i$. It follows from Corollary~\ref{cor:Hellinger subadditivity product measures} and Lemma~\ref{lemma:Bernoulli hellinger upper bound}
\begin{align}
H^2(P,Q) \equiv H^2(X,Y) \le \sum_i H^2(X_i, Y_i) \le \sum_i {(p_i-q_i)^2 \over 2}\left({1 \over q_i} + {1 \over 1-q_i}\right). \label{eq:hellinger upper bound product distributions over hypercube}
\end{align}

Next, we can assume without loss of generality that for all $i$: $q_i \in [{\epsilon \over c n}, 1/2]$, for some large enough constant $c$ to be chosen later. Indeed:
\begin{itemize}
\item Suppose that some $i$ satisfies $q_i>1/2$. Then we can define distributions $P'$ and $Q'$ that sample $(X_1,\ldots,X_{i-1},1-X_i,X_{i+1},\ldots,X_n)$ and $(Y_1,\ldots,Y_{i-1},1-Y_i,Y_{i+1},\ldots,Y_n)$ respectively where $X \sim P$ and $Y \sim Q$, and test the identity of $P'$ with $Q'$. This suffices since $\delta(P,Q)=\delta(P',Q')$. As this can be done for all coordinates $i$ such that $q_i > 1/2$, without affecting the total variation distance, we can assume without loss of generality that $Q$ satisfies $q_i \in [0,1/2]$ for all $i$. 

\item Now, suppose that some $i$ satisfies $q_i < {\epsilon \over c n}$. We can define distributions $P'$ and $Q'$ that sample $(X_1 \oplus Z_1,\ldots,X_n \oplus Z_n)$ and $(Y_1 \oplus W_1,\ldots,Y_n \oplus W_n)$, where $X \sim P$, $Y \sim Q$, $Z_1, \ldots, Z_n$ are mutually independent Bernoullis that are independent of $X$, $W_1, \ldots, W_n$ are mutually independent Bernoullis that are independent of $Y$, and $\oplus$ is the XOR operation between bits. For all $i$, if $q_i < {\epsilon \over c n}$, both $Z_i$ and $W_i$ are sampled from ${\rm Bernoulli}({\epsilon \over cn})$, otherwise they are sampled from ${\rm Bernoulli}(0)$. Clearly, both $P'$ and $Q'$ remain product distributions over $\{0,1\}^n$. Moreover, for all $i$, ${\bf E}[Y_i \oplus W_i] \ge {\epsilon \over c n}$ (as long as ${2 \epsilon \over c n} \le 1$, which we will assume for large enough $c$). Moreover, it is clear that, if $P=Q$, then $P'=Q'$. It is also easy to see that $\delta(P,P') \le {\epsilon \over c}$ and $\delta(Q,Q') \le {\epsilon \over c}$. To show that $\delta(P,P') \le {\epsilon \over c}$ we need to exhibit a coupling between vectors $X \sim P$ and $X' \sim P'$ such that the probability that $X \neq X'$ under the coupling is at most $\epsilon \over c$. This is trivial to obtain and, in fact, we have already given it. Suppose that $X \sim P$, $Z_1,\ldots,Z_n$ are Bernoulli random variables defined as above, and $X'=(X_1 \oplus Z_1,\ldots,X_n \oplus Z_n)$. Clearly, in the event that $Z_1=\ldots,Z_n=0$, which happens with probability at least $1-{\epsilon \over c}$, $X=X'$, proving that under our coupling the probability that $X\neq X'$ is at most ${\epsilon \over c}$ and establishing $\delta(P,P') \le {\epsilon \over c}$. We can similarly show that $\delta(Q,Q') \le {\epsilon \over c}$. From these and the triangle inequality, it follows that if $\delta(P,Q) \ge \epsilon$, then $\delta(P',Q') \ge \epsilon-{2\epsilon \over c}$. Hence to distinguish between $P=Q$ and $\delta(P,Q)\ge \epsilon$, it suffices to be able to distinguish between $P'=Q'$ and $\delta(P',Q')\ge \epsilon-{2\epsilon \over c} = (1-{2\over c})\epsilon$. So making the assumption $q_i \ge {\epsilon \over cn}$, only degrades by a constant factor the total variation distance between the distributions we need to distinguish.
\end{itemize}
 
Now let us get to the core of our proof. Under our assumption that $q_i \le 1/2$ for all $i$, Eq~\eqref{eq:hellinger upper bound product distributions over hypercube} implies that:
\begin{align}
H^2(P,Q)  \le \sum_i {(p_i-q_i)^2 \over q_i}. \label{eq:hellinger upper bound product distributions over hypercube}
\end{align}
It follows from the above that:
\begin{itemize}
\item if $P=Q$, then $\sum_i {(p_i-q_i)^2 \over q_i}=0$;
\item if $\delta(P,Q)\ge \epsilon$, then $\sum_i {(p_i-q_i)^2 \over q_i} \ge \epsilon^2/2$ (where we used that $\sqrt{2} H(P,Q) \ge \delta(P,Q)$).
\end{itemize}

{\em Observation:} Notice that, if $\sum_i p_i=1$ and $\sum_i q_i=1$, we could interpret $p_1,\ldots, p_n$ as a distribution $\tilde{p}$ sampling $i \in \{1,\ldots,n\}$ with probability $p_i$ and we could similarly define distribution $\tilde{q}$ to sample $i \in \{1,\ldots,n\}$ with probability $q_i$. Then $\sum_i {(p_i-q_i)^2 \over q_i} \equiv \chi^2(\tilde{p},\tilde{q})$. Inspired by this observation, our analysis will mimick the analysis of~\cite{AcharyaDK15}, who provided algorithms for distinguishing whether $\chi^2(\tilde{p},\tilde{q})\le \epsilon^2/c_1$ vs $\chi^2(\tilde{p},\tilde{q})\ge \epsilon^2/c_2$, for constants $c_1,c_2$ (under the assumption $q_i \ge \epsilon/c n$, for some constant $c$). Their algorithm uses $O(\sqrt{n}/\epsilon^2)$ samples, which is our goal here.

\smallskip Inspired by our observation, our algorithm is the following. For $m \ge {c'\sqrt{n} \over \epsilon^2}$, we draw $M_i \sim {\rm Poisson}(m)$ independently for all $i$. Using $\max_i{M_i}$ many samples from $P$, we can obtain $M_i$ independent samples from the $i$-th coordinate of $P$. Let $N_i$ denote how many out of these $M_i$ samples are $1$. The way we have set up our sampling guarantees that, for all $i$, $N_i \sim {\rm Poisson}(m \cdot p_i)$, and moreover the variables $N_1,\ldots,N_n$ are mutually independent. Finally, by the strong concentration of the Poisson distribution, $\max_i M_i \le 2 e m$, with probability at least $1-3\cdot ({1 \over 2})^{2 e c' \sqrt{n} \over \epsilon^2}$. (If $\max_i M_i > 2 e m$, we can have our test output a random guess to avoid asking more than $2 e m$ samples from $P$. As we are shooting for error probability $1/3$ we can accommodate this.)

Let us now form the following statistic, mimicking~\cite{AcharyaDK15}:
$Z=\sum_{i}{(N_i-mq_i)^2-N_i \over m q_i}$. It was shown in~\cite{AcharyaDK15}, that ${\bf E}[Z] = m \sum_i {(p_i-q_i)^2 \over q_i}$. It can easily be checked that this equality is unaffected by the values of $\sum_i p_i$ and $\sum_i q_i$. Hence:
\begin{itemize}
\item if $P=Q$, then ${\bf E}[Z]=0$;
\item if $\delta(P,Q) > \epsilon$, then ${\bf E}[Z] \ge m \epsilon^2/2$.
\end{itemize} 
It was also shown in~\cite{AcharyaDK15}, that
$${\bf Var}[Z] \le 4 n + 9 \sqrt{n} {\bf E}[Z]+{2 \over 5}n^{1/4}{\bf E}[Z]^{3/2}.$$
This bound holds as long as $m \ge c'\sqrt{n}/\epsilon^2$ and $q_i \ge \epsilon/c n$, for all $i$, and $c',c$ are large enough. Again, the above bound remains true even if $\sum_i p_i$ and $\sum_i q_i$ do not equal $1$. Hence:
\begin{itemize}
\item if $P=Q$, then ${\bf E}[Z]=0$, hence ${\bf Var}[Z] \le 4 n \le {4 \over c'^2} m^2 \epsilon^4$;
\item if $\delta(P,Q) > \epsilon$, then ${\bf E}[Z] \ge m \epsilon^2/2 \ge {c' \sqrt{n} \over 2}$, hence ${\bf Var}[Z] \le {16 \over c'^2} {\bf E}[Z]^2+{18 \over c'} {\bf E}[Z]^2 + {2 \sqrt{2} \over 5 \sqrt{c'}}{\bf E}[Z]^2.$
\end{itemize} 
The theorem follows from the above via an application of Chebyshev's inequality, and choosing $c'$ large enough.

\medskip The tightness of our sample complexity follows directly from Theorem 17 and Remark 2 of~\cite{DaskalakisDK16}.
\end{prevproof}

\bibliographystyle{alpha}
\bibliography{biblio}

\end{document}